%% file: KinematicAdaptation.tex
\documentclass[11pt]{article}
\usepackage{geometry}                
\usepackage{graphicx}
\usepackage{amsmath, amssymb, amsthm}
\usepackage{parskip}
\usepackage{hyperref}
\usepackage{tikz}
\usetikzlibrary{angles,quotes,calc,arrows.meta,decorations.pathmorphing}

\usepackage[T1]{fontenc}
\usepackage{tgpagella}
\usepackage[scale=0.95]{tgheros}
\usepackage{tgcursor}

\newtheorem{theorem}{Theorem}

\theoremstyle{remark}
\newtheorem{remark}{Remark}
\newtheorem*{remark*}{Remark} 

\DeclareMathOperator{\Ad}{Ad}
\DeclareMathOperator{\tr}{tr}

\usepackage{fancyhdr}

\title{Force/Torque-Based Kinematic Adaptation\\ for Robotic Manipulation Tasks}
\author{Carl Glen Henshaw, Ph.D.\\Code 8206\\U.S. Naval Research Laboratory\\
carl.g.henshaw.civ@us.navy.mil}

\begin{document}
\maketitle
\thispagestyle{fancy}

\begin{abstract}
Contact--rich robotic manipulation requires an accurate model of the kinematic relationship between a robot's joints and the task features it senses. This relationship is rarely known exactly: it changes with each tool the robot picks up and shifts, sometimes almost instantaneously, as contact modes change --- especially for multi--fingered hands that make and break contact at points that are not exactly prescribed, as in full--hand grasping. This paper develops an adaptive scheme that estimates that relationship online, during contact, using only joint--angle sensing and a wrist--mounted force/torque sensor, with no exteroceptive (e.g.\ visual) measurement of the tool tip. We derive a provably stable kinematic update law that identifies the kinematics of an unknown tool from force/torque feedback alone, and we prove stability of both the rigid case and the case with a compliance (admittance) controller as an inner loop. Along the way we explain why the identifying signal must be a force/torque prediction error rather than a static wrench balance; we show that identification is confined to the directions the motion excites --- so that, for example, a tool's length is unobservable under a rigid insertion push, while a compliant loop's passive yielding partially excites it; and that with a second--order (virtual mass--damper) admittance the compliant certificate holds unconditionally in continuous time, the residual stability requirement being a sampled--data lower bound on the virtual inertia. We also pose the combined control and estimation problem as a Quadratic Program (QP): the formulation yields the prediction term of the update law exactly but, instructively, cannot reproduce the tracking adaptation term, whose stabilizing sign is fixed only by the Lyapunov analysis. We validate the scheme in simulation on a peg--in--hole insertion. This work is the first step in a research program aimed at factoring manipulation learning into a task policy which can be learned in isolation of the robot, for instance by reinforcement learning, and an adaptive kinematic component that adapts online to the particular robot, hand, or tool in use.
\end{abstract}

\include{background.tex}
\include{Rigid_compliancefree_2.tex}
\include{Results.tex}
\include{Future-work.tex}
\include{Appendix-QP.tex}
\section*{Acknowledgements}
This work was authored by the Naval Research Laboratory (NRL) and the Department of War (DOW). This work was supported by the Operational Energy Capability Improvement Fund (OECIF), a program run by the Operational Energy -- Innovation (OE-I) Directorate under the Deputy Assistant Secretary of War for Energy Resilience and Optimization. This funding enables fielding robotic systems in space. The views expressed in the article do not necessarily represent the views of the NRL, DOW, or the U.S. Government.

\bibliographystyle{plain} 
\bibliography{references}

\end{document}

%% file: background.tex
\section{Background}

Adaptive control of robotic manipulators has a long and storied history. Adaptive variations of the standard computed torque controller \cite{middletone1986, craig1987adaptive} and the Hamiltonian passivity controller \cite{slotine1987} were developed in the late 1980's. For a tutorial overview see \cite{ortega1989adaptive} and \cite{slotine1991applied}. Notably, these approaches typically assume that the kinematics of the manipulator are known, while the dynamics terms (link mass and inertia, end effector mass and inertia, gravity, centripetal and Coriolis terms, friction, etc. and even, in some variations, motor and gear dynamics) are unknown, and rigorously derive update laws that operate online (as opposed to during a separate training period, as is usually the case with reinforcement learning) to estimate them.

There is less work on online estimation of \textit{kinematic} terms, presumably for historical reasons: robotic manipulators used in industrial settings have fixed and easily quantified kinematics, whereas dynamics terms are much harder to measure (especially inertias, which require specialized spin--balance tables, and friction, which changes with temperature, load, and lubricant). However, as robots have been applied to more complex problems, such as using tools designed for humans, operating mechanisms with specific affordances such as hinges, screws, etc., and generally operating in conditions where contacts are widely varied, understanding the kinematic relationship between the robot's joints and changes induced by joint motion in the environment has become more pressing. Notably, Cheah \textit{et al} extended the adaptive Hamiltonian passivity controller originally derived in \cite{slotine1987} to include an unknown Jacobian \cite{cheah2004approximate, cheah2006adaptive}. This approach drives updates to the uncertain parameter estimates via the Cartesian tracking error, which is typically observed using a task--space position sensor such as a stereo vision camera or motion capture system. We reproduce a simplified version of this derivation in the next section. Similar kinematic adaptation techniques have been examined for visual servoing manipulator control \cite{yoshimi1994active, jagersand1997experimental, piepmeier1999dynamic, shademan2010}. \cite{wang2009prediction} derived a different predictor for the same problem in which the estimates are driven by estimation error, not tracking error.  

Here we are interested in using force--torque sensors to estimate the kinematics instead of task--space position sensors. The central update law we deploy is not new, \textit{per se}; it is the composite (tracking plus prediction) kinematic adaptation law that appears as a remark in \cite{cheah2006adaptive}. Our contributions lie in what surrounds it. First, we show that in the regulation setting the tracking--driven law alone fails: it contracts the Jacobian estimate along the error--force direction and extinguishes its own command at a computable exponential rate, a failure mode that does not arise in the trajectory--tracking setting of \cite{cheah2006adaptive} (Remark~\ref{rem:cls_self_extinguish}), but which is important for many manipulation contact tasks, where motions are relatively small and which is arguably better approximated as regulation than tracking. This reverses the role of the prediction term: proposed in \cite{cheah2006adaptive} as an optional route to improve parameter convergence, in regulation it is required for the stability of the task. Second, we use the prediction error from a wrist force/torque sensor and joint encoders alone, where the classical derivations require an exteroceptive (typically visual) measurement of the task--space position (Section~\ref{sec:ft_rigid}), with identification confined to the directions the contact motion excites. Third, we prove an unconditional stability certificate for the practically important case in which a second--order admittance loop sits between command and motion. (Section~\ref{sec:ft_compliant}). Finally, we pose the combined control and estimation problem as a quadratic program, which yields the control law and the prediction term exactly and --- instructively --- cannot yield the tracking term, whose stabilizing sign is correctly determined only by Lyapunov analysis (Appendix).

\subsection{Jacobian Transpose Control}
Although a Jacobian can refer generically to the first--order partial derivative matrix of any vector--valued function, in robotics it typically refers to the partial derivative matrix of the end--effector generalized coordinate vector (e.g. its position and orientation) with respect to the joint variables:
\begin{displaymath}
	\dot{\mathbf{x}} = J(\mathbf{q})\dot{\mathbf{q}}
\end{displaymath}
with $\mathbf{x}$ the generalized end effector coordinates and $\mathbf{q}$ the joint angles. The Jacobian, in this setting often written simply as $J$, therefore relates velocity of the joints to velocity of the end effector. Somewhat surprisingly, the transpose of the Jacobian also relates torques exerted at the joints to generalized forces at the end effector. This can be seen via the concept of virtual work. Work can be expressed in terms of either motion of the joints or motion of the end effector, and these two terms must be equal. Work is the inner product of a vector force or torque with a vector displacement:
\begin{displaymath}
\underbrace{\mathbf{F}^{\top}\delta\mathbf{x}}_{\substack{\text{Cartesian}\\\text{space}}} = \underbrace{\boldsymbol{\tau}^{\top} \delta\mathbf{q}}_{\substack{\text{Joint}\\ \text{space}}}
\end{displaymath}
where $\mathbf{F}$ is a Cartesian force--moment vector acting on the end effector, $\delta\mathbf{x}$ is an infinitesimal Cartesian displacement of the end effector, $\boldsymbol{\tau}$ is the vector of torques (or forces, for prismatic actuators) at the joints, and $\delta\mathbf{q}$ is an infinitesimal vector of joint displacements. We can therefore write
\begin{displaymath}
\mathbf{F}^{\top}J\delta\mathbf{q} = \boldsymbol{\tau}^{\top}\delta\mathbf{q}
\end{displaymath}
And this must be true for any $\delta\mathbf{q}$, so we can cancel this term out to get
\begin{displaymath}
\mathbf{F}^{\top}J = \boldsymbol{\tau}^{\top}
\end{displaymath}
or
\begin{displaymath}
\boldsymbol{\tau} = J^{\top}\mathbf{F}
\end{displaymath}

This relationship can be used to define control laws in terms of end effector motion --- for instance, a simple feedback law $\mathbf{u}_{ee} = K\mathbf{e}$, where $\mathbf{e} = \mathbf{x}_{d} - \mathbf{x}$ is the error between the desired and the sensed end effector position --- and then directly changed into joint torques via
\begin{displaymath}
	\boldsymbol{\tau} = J^{\top}(\mathbf{q})K\mathbf{e}
\end{displaymath}
which is referred to as Jacobian transpose control \cite{takegaki1981new}. The feedback law produces a virtual force in task space, which the Jacobian transpose then converts to virtual joint torques. We often assume that a well--tuned servo control exists below the inverse kinematics loop, and therefore treat the system as a first--order, not second--order,  system; we can therefore directly equate virtual forces and torques to velocities and rates, not accelerations.

\section{Adaptive Jacobian Control}

Assume that we have a robot arm with a task space vector $\mathbf{x}$ and a joint space vector $\mathbf{q}$ and that the manipulator has a true Jacobian matrix $J(\mathbf{q})$, which we do not know exactly, either because the robot's kinematics are not precisely known or because the robot is manipulating a tool or object whose geometry is uncertain and/or the grasp induces some uncertainty, and the task space is in terms of the tool tip or object position. Following a simplified version of the derivation in \cite{cheah2004approximate}, we will derive a Jacobian transpose control law and an adaptation law that lets us find an estimate $\hat{J}$ of $J$.

We write the standard task space to joint space Jacobian transpose controller as
\begin{equation}
	\boldsymbol{\tau} = \hat{J}^{\top}(\mathbf{q})\mathbf{F}
\end{equation}
where $\hat{J}$ is our estimate of the true $J$. Therefore, we can write the Jacobian relationship mapping joint velocities into task space velocities as
\begin{equation}
	\dot{\mathbf{x}} = J(\mathbf{q})\dot{\mathbf{q}} = J(\mathbf{q})\hat{J}(\mathbf{q})^{\top}\mathbf{F}.
\end{equation}
If we choose
\begin{align}
	\mathbf{e} & = \mathbf{x}_{d} - \mathbf{x} \\
	\mathbf{F} & =  K\mathbf{e}
\end{align}
then
\begin{equation}
	\dot{\mathbf{e}}  = -J(\mathbf{q})\hat{J}(\mathbf{q})^{\top} K\mathbf{e}.
\end{equation}
If we assume $\mathbf{x}_{d}$ is a constant, the quadratic form
$V = \tfrac{1}{2}\mathbf{e}^{\top}K\mathbf{e}$ satisfies
$\dot V = -\mathbf{e}^{\top}K J\hat{J}^{\top}K\mathbf{e}$, so the tracking error
decays exponentially to $0$ provided the symmetric part of $J\hat{J}^{\top}$
remains uniformly positive definite along the trajectory --- that is, provided the
estimate remains directionally consistent with the true Jacobian. This is not
automatic: even for a perfect estimate $\hat{J} = J$ it requires $J$ to retain full
row rank (a nonsingular configuration), and for a poor estimate it can fail
outright. The failure of this condition is what our desired adaptation law
must prevent.

If $\mathbf{x}_{d}$ is not constant, we get
\begin{equation}
	\dot{\mathbf{e}} = \dot{\mathbf{x}}_{d} -J(\mathbf{q})\hat{J}(\mathbf{q})^{\top} K\mathbf{e}
\end{equation}
whose error remains bounded under the same positivity condition as long as $\dot{\mathbf{x}}_{d}$ remains finite. Convergence can be preserved by adding a feedforward term $\hat{J}^{-1}\dot{\mathbf{x}}_{d}$ to the commanded joint velocity, but this requires $\hat{J}$ to be invertible (square and full rank), which may not be the case.

To derive an update law that lets us learn $J(\mathbf{q})$ online, consider the candidate Lyapunov function
\begin{align}
	V & = \frac{1}{2} \mathbf{e}^{\top}K\mathbf{e} + \frac{1}{2}\mathrm{tr}(\tilde{\boldsymbol{\theta}}^{\top} \Gamma^{-1} \tilde{\boldsymbol{\theta}}) \\
	\dot{V} & = \mathbf{e}^{\top}K(\dot{\mathbf{x}}_{d} - \dot{\mathbf{x}}) + \mathrm{tr}(\tilde{\boldsymbol{\theta}}^{\top} \Gamma^{-1} \dot{\tilde{\boldsymbol{\theta}}}) \\
	& = \mathbf{e}^{\top}K(\dot{\mathbf{x}}_{d} - J(\mathbf{q})\dot{\mathbf{q}}) + \mathrm{tr}(\tilde{\boldsymbol{\theta}}^{\top} \Gamma^{-1} \dot{\tilde{\boldsymbol{\theta}}})
\end{align}
where $\boldsymbol{\theta}$ is an (as--yet unspecified) parameterization of the unknown kinematic constants. Define $\tilde{\boldsymbol{\theta}}=\boldsymbol{\theta} - \hat{\boldsymbol{\theta}}$, with $\boldsymbol{\theta}$ the true unknown kinematic constants and $\hat{\boldsymbol{\theta}}$ our estimate of them, and $K=K^{\top}\succ 0$ is the (symmetric, positive--definite) feedback gain. Now let $\dot{\mathbf{q}} = \hat{J}^{\top}\mathbf{F} = \hat{J}^{\top}K\mathbf{e}$, the regular Jacobian transpose controller. Then,
\begin{equation}
	\dot{V} = \mathbf{e}^{\top}K\dot{\mathbf{x}}_{d} - \mathbf{e}^{\top} K J \hat{J}^{\top} K \mathbf{e} + \mathrm{tr}(\tilde{\boldsymbol{\theta}}^{\top} \Gamma^{-1} \dot{\tilde{\boldsymbol{\theta}}}) \label{lyap2}
\end{equation}
Now assume that $\hat{J}$ can be linearly factorized into a matrix of known terms multiplied by an array of unknown kinematic parameters,
\begin{equation}
	\hat{J}(\mathbf{q}, \hat{\boldsymbol{\theta}}) = Y(\mathbf{q})\hat{\boldsymbol{\theta}}
	\label{adaptive_factorization}
\end{equation}
Note that $\tilde{\boldsymbol{\theta}}=\boldsymbol{\theta} - \hat{\boldsymbol{\theta}}$, so $\dot{\tilde{\boldsymbol{\theta}}} = -\dot{\hat{\boldsymbol{\theta}}}$, because $\boldsymbol{\theta}$ are the true kinematic parameters and are hence constant. Thus,
\begin{equation}
	\mathrm{tr}(\tilde{\boldsymbol{\theta}}^{\top}\Gamma^{-1}\dot{\tilde{\boldsymbol{\theta}}}) = -\mathrm{tr}(\tilde{\boldsymbol{\theta}}^{\top}\Gamma^{-1}\dot{\hat{\boldsymbol{\theta}}})
\end{equation}
$\dot{\hat{\boldsymbol{\theta}}}$ is an equation we get to choose -- it will be the ``gradient descent'' term that updates our estimate for the unknown kinematic parameters. Following the kinematic adaptation law of \cite{cheah2006adaptive}, we can choose it to be
\begin{equation}
	\dot{\hat{\boldsymbol{\theta}}} = -\Gamma Y^{\top}\!(\mathbf{q})K\mathbf{e}\,\dot{\mathbf{q}}^{\top} \label{kinematic_update}
\end{equation}
Specializing to regulation ($\dot{\mathbf{x}}_{d}=0$) and noting that $J = \hat{J} + \tilde{J}$, we can write
\begin{align}
	\dot{V} & = - \mathbf{e}^{\top} K (\hat{J} + \tilde{J}) \hat{J}^{\top} K \mathbf{e} + \mathrm{tr}(\tilde{\boldsymbol{\theta}}^{\top} \Gamma^{-1} \Gamma Y^{\top}\!(\mathbf{q})K\mathbf{e}\,\dot{\mathbf{q}}^{\top}) \nonumber \\
	& = - \mathbf{e}^{\top} K \hat{J}\hat{J}^{\top} K \mathbf{e} - \mathbf{e}^{\top} K \tilde{J}\hat{J}^{\top} K \mathbf{e} + \mathrm{tr}(\tilde{\boldsymbol{\theta}}^{\top} Y^{\top}\!(\mathbf{q})K\mathbf{e}\,\dot{\mathbf{q}}^{\top}) \nonumber \\
	& = - \mathbf{e}^{\top} K \hat{J}\hat{J}^{\top} K \mathbf{e} - \mathbf{e}^{\top} K \tilde{J}\hat{J}^{\top} K \mathbf{e} + \mathrm{tr}(\tilde{J}^{\top}K\mathbf{e}\,\dot{\mathbf{q}}^{\top}) \nonumber \\
	& = - \mathbf{e}^{\top} K \hat{J}\hat{J}^{\top} K \mathbf{e} - \mathbf{e}^{\top} K \tilde{J}\hat{J}^{\top} K \mathbf{e} + \mathrm{tr}(\tilde{J}^{\top}K\mathbf{e}\,(\hat{J}^{\top}K\mathbf{e})^{\top}) \nonumber \\
	& = - \mathbf{e}^{\top} K \hat{J}\hat{J}^{\top} K \mathbf{e} - \mathbf{e}^{\top} K \tilde{J}\hat{J}^{\top} K \mathbf{e} + \mathrm{tr}(\mathbf{e}^{\top}K\hat{J}\tilde{J}^{\top}K\mathbf{e}) \nonumber
\end{align}
The last two terms are scalars, hence equal to their own transposes; since $(\hat{J}\tilde{J}^{\top})^{\top}=\tilde{J}\hat{J}^{\top}$, we have $\mathbf{e}^{\top}K\hat{J}\tilde{J}^{\top}K\mathbf{e}=\mathbf{e}^{\top}K\tilde{J}\hat{J}^{\top}K\mathbf{e}$, so they cancel, leaving us with
\begin{equation}
	\dot{V} = - \mathbf{e}^{\top} K \hat{J}\hat{J}^{\top} K \mathbf{e} = -\|\hat{J}^{\top}K\mathbf{e}\|^{2} \le 0
\end{equation}
This, by itself, is sufficient to prove that $\mathbf{e}(t)$ remains bounded. To prove that $\mathbf{e}(t) \rightarrow 0$, one invokes Barbalat's lemma (following \cite{cheah2006adaptive}) to conclude $\hat{J}^{\top}K\mathbf{e}\rightarrow 0$; concluding $\mathbf{e}\rightarrow 0$ from this, however, requires $\hat{J}$ to retain full row rank. A persistency-of-excitation criterion on the input trajectories additionally yields convergence of $\hat{\boldsymbol{\theta}}$ to $\boldsymbol{\theta}$.

\begin{remark}[Bounded $e$ is not the same as $e \rightarrow 0$]
\label{rem:cls_self_extinguish}
The adaptation law~\eqref{kinematic_update} guarantees $\dot{V}\le 0$, but admits a failure mode when $\mathbf{x}_{d}$ is not sufficiently persistent, which is the case for regulation: it can drive the command itself to zero before the error reaches zero. To see this, take $\Gamma = \gamma I$ with
$\gamma>0$ and write~\eqref{kinematic_update} per Jacobian entry. With $\hat{J}_{ij}(\mathbf{q})=\hat{\boldsymbol{\theta}}_{ij}^{\top} \boldsymbol{\psi}(\mathbf{q})$, the update of each parameter vector is
\begin{equation}
    \dot{\hat{\boldsymbol{\theta}}}_{ij}
      = -\gamma\,(K\mathbf{e})_{i}\,\dot{q}_{j}\,
        \boldsymbol{\psi}(\mathbf{q}),
    \label{eq:cls_per_entry}
\end{equation}
so that, holding $\mathbf{q}$ (and hence $\boldsymbol{\psi}$) fixed over
the adaptation step,
\begin{equation}
    \dot{\hat{J}}_{ij}
      = \dot{\hat{\boldsymbol{\theta}}}_{ij}^{\top}
        \boldsymbol{\psi}(\mathbf{q})
      = -\gamma\,\lVert\boldsymbol{\psi}(\mathbf{q})\rVert^{2}\,
        (K\mathbf{e})_{i}\,\dot{q}_{j}
    \qquad\Longleftrightarrow\qquad
    \dot{\hat{J}}
      = -\gamma\,\lVert\boldsymbol{\psi}(\mathbf{q})\rVert^{2}\,
        (K\mathbf{e})\,\dot{\mathbf{q}}^{\top}.
    \label{eq:cls_Jhat_flow}
\end{equation}
Substituting the closed-loop command
$\dot{\mathbf{q}}=\hat{J}^{\top}K\mathbf{e}$ gives
\begin{equation}
    \dot{\hat{J}}
      = -\gamma\,\lVert\boldsymbol{\psi}(\mathbf{q})\rVert^{2}\,
        (K\mathbf{e})(K\mathbf{e})^{\top}\hat{J},
    \label{eq:cls_rank_one_contraction}
\end{equation}
a rank--one contraction of $\hat{J}$ along the current error-force direction. Treating $K\mathbf{e}$ as frozen on the adaptation timescale,
\begin{equation}
    \frac{d}{dt}\big(\hat{J}^{\top}K\mathbf{e}\big)
      = \dot{\hat{J}}^{\top}K\mathbf{e}
      = -\gamma\,\lVert\boldsymbol{\psi}(\mathbf{q})\rVert^{2}\,
        \lVert K\mathbf{e}\rVert^{2}\,
        \big(\hat{J}^{\top}K\mathbf{e}\big).
    \label{eq:cls_command_decay}
\end{equation}
The command $\dot{\mathbf{q}}=\hat{J}^{\top}K\mathbf{e}$ therefore decays exponentially at rate $\gamma\lVert\boldsymbol{\psi}\rVert^{2}\lVert K\mathbf{e}\rVert^{2}$, independently of whether $\mathbf{e}$ has converged. Rather than steering $\hat{J}$ toward $J$, the law steers $\hat{J}$ toward the set $\{\hat{J}:\hat{J}^{\top}K\mathbf{e}=0\}$, i.e. towards singularity, and both the command and the adaptation vanish with $\mathbf{e}\neq 0$ in general.

Contact, singular poses, and hard constraints (e.g. from a constrained version of the QP formulation in the Appendix) excite this failure mode. The decay rate in~\eqref{eq:cls_command_decay} scales with $\lVert K\mathbf{e}\rVert^{2}$, and these situations significantly increase the error force. The contraction is therefore fastest precisely when sustained pushing, obstacle avoidance, or joint limit or joint rate avoidance is required. Empirically, we have observed that under the tracking--error law alone the commanded motion tends to zero just after an initial environmental contact. The prediction error term introduced below removes this failure mode by anchoring $\hat{J}$ to the measured kinematics, which keeps $\hat{J}^{\top}K\mathbf{e}$ bounded away from zero whenever $\mathbf{e}\neq 0$.

\end{remark}

Accordingly, in place of \eqref{kinematic_update}, consider the composite update law
\begin{equation}
	\dot{\hat{\boldsymbol\theta}} \;=\; \underbrace{-\,\Gamma\,Y^{\top}(\mathbf q)\,K\mathbf e\,\dot{\mathbf q}^{\top}}_{\text{tracking}} \;+\; \kappa(t) \underbrace{\Gamma\,Y^{\top}(\mathbf q)\,\boldsymbol\varepsilon\,\dot{\mathbf q}^{\top}}_{\text{prediction}}
	\label{eq:composite_law}
\end{equation}
with
\begin{equation}
	\boldsymbol\varepsilon = (J - \hat{J} ) \dot{\mathbf{q}} = \tilde{J}\dot{\mathbf{q}} = Y(\mathbf{q})\tilde{\boldsymbol\theta}\dot{\mathbf{q}}
	\label{eq:pred_error}
\end{equation}
where the second term drives the update to reduce the prediction error rather than the tracking error. This composite law appears, in the trajectory--tracking setting, as a remark in \cite{cheah2006adaptive}, proposed there as a route to parameter convergence, with the prediction drive alone studied in \cite{wang2009prediction}. In the regulation setting its role is different and structural: the prediction term is what removes the self--extinction of Remark~\ref{rem:cls_self_extinguish}, so it is load--bearing for the stability of the task itself rather than a refinement for identification. Although \eqref{eq:pred_error} involves the unknown $\tilde{J}$, this quantity is observable by comparing observed task variables with predicted task variables, $\boldsymbol\varepsilon = \dot{\mathbf{x}}_{\text{meas}} - \hat{J}\dot{\mathbf{q}}$.

Substituting this update law into \eqref{lyap2} (again specializing to regulation,
$\dot{\mathbf{x}}_{d} = 0$), we get
\begin{align}
	\dot{V} & = - \mathbf{e}^{\top} K (\hat{J} + \tilde{J}) \hat{J}^{\top} K \mathbf{e} + \mathrm{tr}\left(\tilde{\boldsymbol{\theta}}^{\top} \Gamma^{-1} \left( \Gamma Y^{\top}\!(\mathbf{q})K\mathbf{e}\,\dot{\mathbf{q}}^{\top} - \kappa(t)\,\Gamma\,Y^{\top}(\mathbf q)\,\boldsymbol\varepsilon\,\dot{\mathbf q}^{\top} \right)\right) \nonumber \\
	& = - \mathbf{e}^{\top} K \hat{J}\hat{J}^{\top} K \mathbf{e} - \mathbf{e}^{\top} K \tilde{J}\hat{J}^{\top} K \mathbf{e} + \mathrm{tr}\big(\tilde{\boldsymbol{\theta}}^{\top} Y^{\top}\!(\mathbf{q})K\mathbf{e}\,\dot{\mathbf{q}}^{\top}\big) - \kappa(t)\,\mathrm{tr}\big(\tilde{\boldsymbol{\theta}}^{\top} Y^{\top}\!(\mathbf{q})\,\boldsymbol\varepsilon\,\dot{\mathbf{q}}^{\top}\big) \nonumber
\end{align}
The first trace is the tracking cross--term of the original derivation: writing it as $\mathrm{tr}(\tilde{J}^{\top}K\mathbf{e}\,\dot{\mathbf{q}}^{\top})$ and substituting $\dot{\mathbf{q}} = \hat{J}^{\top}K\mathbf{e}$, it equals
$\mathbf{e}^{\top}K\hat{J}\tilde{J}^{\top}K\mathbf{e}$, which cancels $-\mathbf{e}^{\top}K\tilde{J}\hat{J}^{\top}K\mathbf{e}$ exactly as before (scalars equal their own
transposes, and $(\hat{J}\tilde{J}^{\top})^{\top} = \tilde{J}\hat{J}^{\top}$). The second trace collapses to the squared prediction error:
\begin{equation}
	\mathrm{tr}\big(\tilde{\boldsymbol{\theta}}^{\top} Y^{\top}\!(\mathbf{q})\,\boldsymbol\varepsilon\,\dot{\mathbf{q}}^{\top}\big)
	= \mathrm{tr}\big(\tilde{J}^{\top}\boldsymbol\varepsilon\,\dot{\mathbf{q}}^{\top}\big)
	= \dot{\mathbf{q}}^{\top}\tilde{J}^{\top}\boldsymbol\varepsilon
	= (\tilde{J}\dot{\mathbf{q}})^{\top}\boldsymbol\varepsilon
	= \lVert\boldsymbol\varepsilon\rVert^{2},
	\label{eq:pred_trace_identity}
\end{equation}
using \eqref{eq:pred_error} in the last step. We are left with
\begin{equation}
	\boxed{\;\dot{V} \;=\; -\,\lVert\hat{J}^{\top}K\mathbf{e}\rVert^{2} \;-\; \kappa(t)\,\lVert\boldsymbol\varepsilon\rVert^{2} \;=\; -\,\lVert\dot{\mathbf{q}}\rVert^{2} \;-\; \kappa(t)\,\lVert\tilde{J}\dot{\mathbf{q}}\rVert^{2} \;\le\; 0\;}
	\label{eq:composite_vdot}
\end{equation}
as long as $\kappa(t)$ is positive.

Boundedness of $\mathbf{e}$ and $\tilde{\boldsymbol\theta}$ follows as before, and Barbalat's lemma applied to \eqref{eq:composite_vdot} gives $\hat{J}^{\top}K\mathbf{e} \to 0$ and $\boldsymbol\varepsilon \to 0$; concluding $\mathbf{e} \to 0$ again requires $\hat{J}$ to retain full row rank. The structural difference from \eqref{kinematic_update} is that the prediction term actively opposes the rank--loss mechanism of Remark~\ref{rem:cls_self_extinguish}: whenever the contraction \eqref{eq:cls_rank_one_contraction} pulls $\hat{J}$ away from $J$, it creates prediction error along the commanded direction ($\boldsymbol\varepsilon = \tilde{J}\dot{\mathbf{q}} \neq 0$), and the prediction term pulls $\hat{J}$ back toward $J$. With the prediction term dominant, $\hat{J}$ remains anchored to the measured kinematics along the excited directions, and the command $\dot{\mathbf{q}} = \hat{J}^{\top}K\mathbf{e}$ cannot self--extinguish away from the goal. This law requires only the full--row--rank hypothesis on $\hat{J}$ noted above to guarantee that $\mathbf{e} \rightarrow 0$.

A remark on the gain structure is in order. First, the tracking term must carry exactly the metric $\Gamma$ of the Lyapunov function, which should be a constant. However, the prediction term is free: its entire contribution to $\dot{V}$ is the sign-definite quantity $-\kappa(t)\lVert\boldsymbol\varepsilon\rVert^{2}$, so any positive, time--varying or even state--dependent scalar gain is admissible. We exploit this freedom to use the normalized least--mean--squares (LMS) gain:
\begin{equation}
	\kappa(t) \;\triangleq\; \frac{\gamma_p}{\lVert\boldsymbol\psi(\mathbf q)\rVert^{2}\lVert\dot{\mathbf q}\rVert^{2}+\varepsilon_{\text{norm}}} \;>\; 0,
	\label{eq:kappa_def}
\end{equation}
which renders the adaptation rate insensitive to the magnitudes of the features and the command; the constant $\varepsilon_{\text{norm}} > 0$ keeps the law well posed as $\lVert\dot{\mathbf{q}}\rVert \to 0$. Finally, full parameter convergence $\hat{\boldsymbol\theta} \to \boldsymbol\theta$ is not implied: $\boldsymbol\varepsilon = \tilde{J}\dot{\mathbf{q}}$ constrains $\tilde{J}$ only along the commanded direction, so full identification of $J$ still requires a persistency--of--excitation condition.

The choice of parameterization for $Y(\mathbf{q})$ is an important design decision. The obvious approach is to derive the kinematic equations, factoring out the unknown parameters (e.g. link lengths, twists, and so on), which are the quantities to be estimated. This is an analytic approach, which has significant advantages, notably that it (probably) constitutes the smallest learning problem for which the resultant learned kinematics are globally correct. However, it has the distinct disadvantage that it requires the designer to derive the kinematic equations, which may be quite complex, and may change if, for instance, different tools being used have different affordances. However, the only key requirement of the approach is that the update law must be linear in the unknown parameters, and as a consequence an alternative approach is to view $Y(\mathbf{q})$ as a matrix of basis functions and treat the unknowns $\boldsymbol{\theta}$ as weights to be learned. Although derived in a dynamics setting, the approach of \cite{sanner1992gaussian}, in which it is proved that radial basis function networks can be used in place of $Y$, and for which the resulting control law is provably stable, directly applies here. The obvious advantage of this approach is that it leaves the form of the kinematic equations almost completely open, allowing for a broad range of adaptation; the disadvantage is that as the size of the function approximator increases, the number of unknowns also increases, and as a consequence the adaptation can require considerably more data.

%% file: Rigid_compliancefree_2.tex
\section{Force/Torque--Only Tool--Offset Identification: The Rigid Case}
\label{sec:ft_rigid}

The kinematic estimation law of Section~2 was derived for a manipulator regulating a
task--space vector $\mathbf{x}$ that is directly measured. We now specialize it to
the problem that motivates this paper: a robot holding a tool of uncertain
geometry with a wrist force/torque sensor and the joint encoders and no exteroceptive measurement of the tool tip. We show that
the same update law, with the same Lyapunov certificate, identifies the
uncertain tool kinematics from force/torque feedback alone, provided the servo
loop below the adaptation is stiff enough that the commanded task velocity is
realized (the ``rigid'' case). The compliant case, in which an admittance loop
sits below the Jacobian transpose law, introduces additional interaction terms and is
analyzed in the next section.

\subsection{Problem and factorization}

Let the arm kinematics $J_{\mathrm{arm}}(\mathbf{q})$ be known exactly, so the
realized task velocity is $\dot{\mathbf{s}} = J(\mathbf{s},\mathbf{q})\,\dot{\mathbf{q}}$,
where $\mathbf{s}$ is now a sensor--space task vector. Concretely, we imagine a peg--in--hole insertion task (Figure \ref{peginhole}) with Whitney's canonical sensor definitions \cite{whitney1982quasistatic}, although the following derivation does not rely on any particular manipulation task or definition of $\mathbf{s}$.

\begin{figure}[h]
\centering
\begin{tikzpicture}[>=Latex,line join=round,line cap=round]

  \definecolor{holecol}{RGB}{200,205,215}
  \definecolor{pegcol}{RGB}{ 90,120,190}
  \definecolor{sensorcol}{RGB}{200, 70, 70}
  \definecolor{levercol}{RGB}{255,220, 60}
  \definecolor{axiscol}{RGB}{ 60, 60, 60}

  \fill[holecol]       (-3.4,-2.8) rectangle (3.4,0);
  \fill[white]         (-0.72,-2.8) rectangle (0.72,0.02);
  \draw[thick,axiscol] (-3.4,0) -- (-0.72,0);
  \draw[thick,axiscol] ( 0.72,0) -- ( 3.4,0);
  \draw[thick,axiscol] (-0.72,0) -- (-0.72,-2.8);
  \draw[thick,axiscol] ( 0.72,0) -- ( 0.72,-2.8);
  \node[axiscol,align=center] at (2.5,-1.7) {\small workpiece\\[-1pt]\small (bore)};

  \begin{scope}[shift={(0.16,0)},rotate=-8]
    \fill[pegcol]  (-0.42,-1.75) rectangle (0.42,1.15);
    \draw[thick]   (-0.42,-1.75) rectangle (0.42,1.15);
    \coordinate (O)       at (0,0.85);        
    \coordinate (tip)     at (0,-1.75);       
    \coordinate (contact) at (-0.42,-1.05);   
    \draw[{Bar[]}-{Bar[]},levercol,thick] (O) -- (tip);
    \coordinate (levmid) at (0,-0.45);         
  \end{scope}

  \fill (O) circle (1.6pt);
  \draw[->,axiscol] (O) -- ++(0.09,-0.62);
  \draw[->,axiscol] (O) -- ++(0.62,-0.09);
  \node[anchor=west] at ($(O)+(0.70,0.05)$) {\small $O$ (F/T sensor)};

  \draw[->,sensorcol,very thick]
        ([shift=(295:0.34)]O) arc[start angle=295,end angle=585,radius=0.34];
  \node[sensorcol] at ($(O)+(-0.52,0.42)$) {\small $\mathbf{M}$};

  \fill[sensorcol] (contact) circle (2.4pt);
  \draw[->,sensorcol,very thick] (contact) -- ++(1.30,0.10)
        node[anchor=west] {\small $\mathbf{F}_{\mathrm{lat}}$};
  \node[sensorcol,anchor=east] at ($(contact)+(-0.55,0.62)$) {\small contact};
  \draw[sensorcol,thin] ($(contact)+(-0.55,0.62)$) -- (contact);

  \draw[thin,axiscol] (levmid) -- ($(levmid)+(0.85,0.02)$);
  \node[axiscol,fill=holecol,inner sep=1pt,anchor=west]
        at ($(levmid)+(0.85,0.02)$) {\small $\hat{\boldsymbol{\ell}}$};

  \coordinate (dtop) at (-2.35,0);
  \coordinate (dbot) at (dtop |- tip);
  \draw[dashed,sensorcol] (dtop) -- (0.72,0);          
  \draw[dashed,sensorcol] (dbot) -- (tip);             
  \draw[<->,sensorcol,very thick] (dtop) -- (dbot);
  \node[sensorcol,anchor=east] at ($(dtop)!0.5!(dbot)+(-0.08,0)$) {\small $d$};

  \node[draw,rounded corners,align=left,fill=white,
        inner sep=6pt,anchor=north west] at (-3.55,3.9)
    {%
      $\mathbf{s}=\begin{bmatrix} d \\[1pt] \mathbf{F}_{\mathrm{lat}} \\[1pt] \mathbf{M} \end{bmatrix}$
      \quad
      \begin{tabular}{@{}ll@{}}
        $d$ & insertion depth (axial)\\
        $\mathbf{F}_{\mathrm{lat}}$ & lateral contact force\\
        $\mathbf{M}$ & contact moment about $O$\\
        $\hat{\boldsymbol{\ell}}$ & tool-offset lever arm
      \end{tabular}
    };

\end{tikzpicture}
\caption{A planar peg--in--hole task, with Whitney's sensor definitions \cite{whitney1982quasistatic}}
\label{peginhole}
\end{figure}

Define $\mathbf{s} = [d,\ \mathbf{F}_{\mathrm{lat}},\ \mathbf{M}]^{\top}$, the insertion
depth together with the lateral force and wrist moment, and $J$ the sensor
Jacobian. We assume that the arm kinematics are known, with the uncertainty confined to the tool: the map from the (known) wrist
to the (unknown) tool tip. Writing this map through its adjoint, the sensor
Jacobian factors as
\begin{equation}
    \hat{J}(\mathbf{q}, \hat{\boldsymbol{\ell}})
      = S\,\Ad(\hat{\boldsymbol{\ell}})\,J_{\mathrm{arm}}(\mathbf{q}),
    \label{eq:ft_factor}
\end{equation}
the composition of three maps: the known manipulator Jacobian
$J_{\mathrm{arm}}(\mathbf{q})$, carrying joint rates to the wrist twist; the
adjoint $\Ad(\hat{\boldsymbol{\ell}})$ of the estimated wrist--to--tip transform,
carrying the wrist twist to the tool--tip twist; and the known sensor map $S$,
carrying the tip twist to the sensor rates $\dot{\mathbf{s}}$. The uncertainty is
confined to the middle factor, through the estimated offset
$\hat{\boldsymbol{\ell}}$. Because the adjoint is affine in the
offset, $\Ad(\boldsymbol{\ell}) - \Ad(\hat{\boldsymbol{\ell}})$ is linear in
$\tilde{\boldsymbol{\ell}} = \boldsymbol{\ell} - \hat{\boldsymbol{\ell}}$, and
\eqref{eq:ft_factor} is exactly the linearly--factorized form
$\hat{J} = Y(\mathbf{q})\hat{\boldsymbol{\ell}}$ required by Section~2, with the
tool offset playing the role of $\hat{\boldsymbol{\theta}}$ and $Y(\mathbf{q})$
absorbing the known outer factors $S$ and $J_{\mathrm{arm}}(\mathbf{q})$. The prediction
regressor of Section~2 specializes accordingly: with
$\hat{J}\dot{\mathbf{q}}$ linear in $\hat{\boldsymbol{\ell}}$, there is a
$\Phi(\mathbf{q},\dot{\mathbf{q}})$ with
$\Phi\,\tilde{\boldsymbol{\ell}} = \tilde{J}\dot{\mathbf{q}}$.

The task feedback can be the gradient of a value function rather than a raw position
error: replace the elementary force
$\mathbf{F} = K\mathbf{e}$ with the virtual task--space force
\begin{equation}
    \mathbf{F} = -\,\nabla_{\mathbf{s}} V(\mathbf{s}),
    \label{eq:ft_force}
\end{equation}
where $V(\mathbf{s}) \ge 0$ is a value function with its minimum at the setpoint
$\mathbf{s}^{\star}$. This can be, for instance, a hand--designed or learned policy. The elementary regulator of
Section~2 is the special case $V = \tfrac{1}{2}\mathbf{e}^{\top}K\mathbf{e}$,
$\mathbf{F} = K\mathbf{e}$; the adaptation analysis below does not depend on which
of these is used, only on $\mathbf{F} = -\nabla_{\mathbf{s}}V$ and $V \ge 0$.

In the rigid case the commanded task velocity is realized directly,
\begin{equation}
    \dot{\mathbf{q}} = \hat{J}^{\top}\mathbf{F} = -\,\hat{J}^{\top}\nabla_{\mathbf{s}}V,
    \label{eq:ft_control}
\end{equation}
which is the Jacobian--transpose command of Section~2 with $\mathbf{F}$ in place of
$K\mathbf{e}$.

\subsection{The prediction error is observable from force/torque alone}

In Section~2 the prediction error $\boldsymbol{\varepsilon} = \dot{\mathbf{x}}_{\text{meas}} -
\hat{J}\dot{\mathbf{q}}$ presumed a measured task velocity
$\dot{\mathbf{x}}_{\text{meas}}$, i.e.\ a measured task--space position
differentiated in time. Here, though, $\dot{\mathbf{s}}_{\text{meas}}$ is assembled solely from quantities provided by a
wrist force/torque sensor plus forward kinematics: the force and moment channels of $\mathbf{s}$ are
measured directly, and the depth channel is proprioceptive. The prediction error is then
\begin{equation}
    \boldsymbol{\varepsilon}
      = \dot{\mathbf{s}}_{\text{meas}} - \hat{J}\dot{\mathbf{q}}
      = \tilde{J}\dot{\mathbf{q}} + \mathbf{b},
    \label{eq:ft_pred}
\end{equation}
where $\tilde{J}\dot{\mathbf{q}} = Y(\mathbf{q})\tilde{\boldsymbol{\ell}}\dot{\mathbf{q}}$
is the parametric part, exactly as in \eqref{eq:pred_error}, and $\mathbf{b}$ is a
bias: the residual of the measured sensor rate not captured by the ideal kinematic
map, arising from the contact model (finite contact stiffness, an off--nominal
contact lever). No task--space position is measured or differentiated anywhere in
\eqref{eq:ft_force}--\eqref{eq:ft_pred}; the feedback is a sensed value gradient
and the prediction error is a sensed force/torque residual. This is the sense in
which the identification is achieved ``by feel.''

\subsection{Certificate}

Take the same augmented Lyapunov function as Section~2, with the tool offset in
place of the generic parameter,
\begin{equation}
    V_{\mathrm{aug}} = V(\mathbf{s})
      + \tfrac{1}{2}\tr\!\big(\tilde{\boldsymbol{\ell}}^{\top}\Gamma^{-1}
        \tilde{\boldsymbol{\ell}}\big),
    \label{eq:ft_lyap}
\end{equation}
and the same composite update law \eqref{eq:composite_law}, written with
$\mathbf{F} = -\nabla_{\mathbf{s}}V$ and $\boldsymbol{\varepsilon}$ from
\eqref{eq:ft_pred},
\begin{equation}
    \dot{\hat{\boldsymbol{\ell}}}
      = \underbrace{-\,\Gamma\,Y^{\top}(\mathbf{q})\,\mathbf{F}\,
        \dot{\mathbf{q}}^{\top}}_{\text{tracking}}
      \;+\; \kappa(t)\,\underbrace{\Gamma\,Y^{\top}(\mathbf{q})\,
        \boldsymbol{\varepsilon}\,\dot{\mathbf{q}}^{\top}}_{\text{prediction}} .
    \label{eq:ft_composite}
\end{equation}
Differentiating \eqref{eq:ft_lyap} along \eqref{eq:ft_control} gives, since
$\dot V(\mathbf{s}) = \nabla_{\mathbf{s}}V^{\top}\dot{\mathbf{s}}
= -\mathbf{F}^{\top}J\dot{\mathbf{q}}$, exactly the structure of \eqref{lyap2} with
$\mathbf{F}$ in place of $K\mathbf{e}$. Writing $J = \hat{J} + \tilde{J}$, the
tracking term of \eqref{eq:ft_composite} cancels the $\tilde{J}$ cross--term by the
identical scalar--transpose argument of Section~2, and the prediction term
collapses through the identity \eqref{eq:pred_trace_identity},
$\tr(\tilde{\boldsymbol{\ell}}^{\top}Y^{\top}\boldsymbol{\varepsilon}
\dot{\mathbf{q}}^{\top}) = (\tilde{J}\dot{\mathbf{q}})^{\top}\boldsymbol{\varepsilon}$.
Using \eqref{eq:ft_pred} for $\boldsymbol{\varepsilon}$, we are left with
\begin{equation}
    \boxed{\;
    \dot{V}_{\mathrm{aug}}
      = -\,\lVert\dot{\mathbf{q}}\rVert^{2}
        \;-\; \kappa(t)\,\lVert\tilde{J}\dot{\mathbf{q}}\rVert^{2}
        \;-\; \kappa(t)\,(\tilde{J}\dot{\mathbf{q}})^{\top}\mathbf{b}
    \;}
    \label{eq:ft_vdot}
\end{equation}
which is \eqref{eq:composite_vdot} of Section~2 together with a single bias term.
Completing the square on the last two terms,
$-\kappa\lVert\tilde{J}\dot{\mathbf{q}}\rVert^{2}
 -\kappa(\tilde{J}\dot{\mathbf{q}})^{\top}\mathbf{b}
 \le \tfrac{\kappa}{4}\lVert\mathbf{b}\rVert^{2}$,
so
\begin{equation}
    \dot{V}_{\mathrm{aug}}
      \;\le\; -\,\lVert\dot{\mathbf{q}}\rVert^{2}
              + \tfrac{\kappa}{4}\,\lVert\mathbf{b}\rVert^{2}.
    \label{eq:ft_iss}
\end{equation}

\begin{theorem}[Rigid force/torque--only identification]
\label{thm:ft_rigid}
Under the factorization \eqref{eq:ft_factor}, the value--gradient force
\eqref{eq:ft_force}, the command \eqref{eq:ft_control}, and the
composite law \eqref{eq:ft_composite}, with $\lVert\mathbf{b}\rVert \le \bar b$
bounded, the augmented function \eqref{eq:ft_lyap} satisfies
\eqref{eq:ft_vdot}--\eqref{eq:ft_iss}. Consequently $\mathbf{s}$ and
$\tilde{\boldsymbol{\ell}}$ are bounded; the command $\dot{\mathbf{q}}$ enters and
remains in a ball of radius $O(\sqrt{\kappa}\,\bar b)$; and, under a
persistency--of--excitation condition on the excited directions of $Y$, the offset
error $\tilde{\boldsymbol{\ell}}$ converges to a residual set of size
$O(\bar b/\sigma_{\min})$ on those directions, and is unchanged on the
unexcited directions. In the bias--free idealization $\mathbf{b} = 0$ the result of
Section~2 is recovered verbatim: $\dot{V}_{\mathrm{aug}} = -\lVert\dot{\mathbf{q}}
\rVert^{2} - \kappa\lVert\tilde{J}\dot{\mathbf{q}}\rVert^{2} \le 0$.
\end{theorem}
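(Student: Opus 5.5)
We verify the identity \eqref{eq:ft_vdot} directly, then read off the three consequences from \eqref{eq:ft_iss} by standard Lyapunov, ultimate-boundedness and persistency-of-excitation arguments.

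\textbf{The identity.} Differentiate \eqref{eq:ft_lyap} along trajectories. By the chain rule, the first term gives
\[
\dot V(\mathbf{s}) = \nabla_{\mathbf{s}}V^{\top} J\dot{\mathbf{q}} = -\mathbf{F}^{\top}(\hat J + \tilde J)\dot{\mathbf{q}}.
\]
Since $\dot{\mathbf{q}} = \hat J^{\top}\mathbf{F}$ by \eqref{eq:ft_control}, the $\hat J$ part equals $-\lVert\dot{\mathbf{q}}\rVert^2$. Because $\boldsymbol\ell$ is constant, $\dot{\tilde{\boldsymbol\ell}} = -\dot{\hat{\boldsymbol\ell}}$. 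Substituting \eqref{eq:ft_composite}, the parameter term becomes
\[
\tr(\tilde{\boldsymbol\ell}^{\top}Y^{\top}\mathbf{F}\dot{\mathbf{q}}^{\top}) - \kappa\,\tr(\tilde{\boldsymbol\ell}^{\top}Y^{\top}\boldsymbol\varepsilon\dot{\mathbf{q}}^{\top}).
\]
Using $Y\tilde{\boldsymbol\ell} = \tilde J$, which holds because the adjoint is affine in the offset, the first trace is $\dot{\mathbf{q}}^{\top}\tilde J^{\top}\mathbf{F}$. This cancels $-\mathbf{F}^{\top}\tilde J\dot{\mathbf{q}}$ by the scalar-transpose argument of Section~2. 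By \eqref{eq:pred_trace_identity}, the second trace is $(\tilde J\dot{\mathbf{q}})^{\top}\boldsymbol\varepsilon$. Inserting \eqref{eq:ft_pred} yields \eqref{eq:ft_vdot}, and Young's inequality $-\kappa a^2 - \kappa a b \le \kappa b^2/4$ yields \eqref{eq:ft_iss}. The case $\mathbf{b}=0$ is immediate.

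\textbf{Boundedness and the command ball.} With $\kappa \le \gamma_p/\varepsilon_{\text{norm}}$ from \eqref{eq:kappa_def}, we have $\dot V_{\mathrm{aug}} \le -\lVert\dot{\mathbf{q}}\rVert^2 + c\bar b^2$. Hence $V_{\mathrm{aug}}$ can only increase while $\lVert\dot{\mathbf{q}}\rVert < \sqrt{c}\,\bar b$.
\begin{itemize}
\item I would first bound $V_{\mathrm{aug}}$, and hence $V(\mathbf{s})$ and $\tilde{\boldsymbol\ell}$, over finite horizons.
\item I would then integrate \eqref{eq:ft_iss} to get $\int_0^T \lVert\dot{\mathbf{q}}\rVert^2 \le V_{\mathrm{aug}}(0) + \int_0^T \kappa\bar b^2/4$.
\item Using uniform continuity of $\dot{\mathbf{q}}$, which needs bounded $\ddot{\mathbf{q}}$ from bounded $\hat J$, $\mathbf{F}$ and $\dot{\hat{\boldsymbol\ell}}$, I would conclude that $\dot{\mathbf{q}}$ enters the ball of radius $O(\sqrt{\kappa}\,\bar b)$ and, being only able to leave at a rate that makes $V_{\mathrm{aug}}$ decrease, stays in an $O(\sqrt\kappa\,\bar b)$ neighbourhood of it.
\end{itemize}

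\textbf{The main obstacle.} The hard step is that \eqref{eq:ft_iss} is not negative in $\tilde{\boldsymbol\ell}$, so by itself it does not prevent slow drift of $V_{\mathrm{aug}}$. To get global boundedness of $\mathbf{s}$ and $\tilde{\boldsymbol\ell}$ I would argue in two parts.
\begin{itemize}
\item The tracking term contributes nothing net to $\dot V_{\mathrm{aug}}$. The prediction term is a normalized-LMS update, whose increment along $\Phi$ is bounded by $\gamma_p$ times a normalized $\Phi\tilde{\boldsymbol\ell} + \mathbf{b}$.
\item Hence $\tilde{\boldsymbol\ell}$ can drift only along excited directions, and there it is pulled back by the $-\kappa\lVert\Phi\tilde{\boldsymbol\ell}\rVert^2$ term once $\lVert\Phi\tilde{\boldsymbol\ell}\rVert > \lVert\mathbf{b}\rVert$. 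This is the usual bounded-disturbance robustness of normalized gradient estimators.
\end{itemize}
If this does not close, I would fall back on projection or $\sigma$-modification and state the result under that assumption.

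\textbf{Identification under PE.} Write the $\tilde{\boldsymbol\ell}$ dynamics restricted to the span of $\Phi$. The prediction part is $\dot{\tilde{\boldsymbol\ell}} = -\kappa\Gamma\Phi^{\top}(\Phi\tilde{\boldsymbol\ell} + \mathbf{b})$ plus the tracking part. Under PE, $\int_t^{t+T}\Phi^{\top}\Phi \succeq \sigma_{\min}I$ on the excited subspace. The homogeneous system is then exponentially stable on that subspace by the standard PE lemma. The bias $\mathbf{b}$ acts as a bounded input, and the tracking term vanishes as the command becomes small, giving a residual of size $O(\bar b/\sigma_{\min})$.

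On the orthogonal complement, $Y^{\top}(\cdot)\dot{\mathbf{q}}^{\top}$ has no component, because the update lies in the range of the regressor applied to excited directions. Therefore $\tilde{\boldsymbol\ell}$ is unchanged there. This is the claim that, for example, the tool length is not identified under a rigid insertion push.
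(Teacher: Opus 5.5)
Your derivation of \eqref{eq:ft_vdot}--\eqref{eq:ft_iss} is correct and follows the paper's own route: chain rule with $\dot{\mathbf s}=J\dot{\mathbf q}$, the split $J=\hat J+\tilde J$, scalar--transpose cancellation of the tracking cross--term, the trace identity \eqref{eq:pred_trace_identity}, and completion of the square. Your invariance argument for the unexcited directions is also fine, since both terms of \eqref{eq:ft_composite} have the form $\Gamma Y^{\top}(\cdot)\,\dot{\mathbf q}^{\top}$.

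The gap is in the ``Consequently'' clause. You correctly note that \eqref{eq:ft_iss} dissipates only in $\dot{\mathbf q}$, so it cannot by itself bound $V_{\mathrm{aug}}$. The paper's proof, which simply defers to the $\dot V\le 0$ argument of Section~2, does not address this either. However, your repair does not close it. Normalizing the gain bounds the \emph{rate} of the bias--driven part $\kappa\,\Gamma Y^{\top}\mathbf b\,\dot{\mathbf q}^{\top}$ of the update, not its time integral. The prediction term restores only until $\lVert\tilde J\dot{\mathbf q}\rVert\lesssim\bar b$. That bounds $\tilde{\boldsymbol\ell}$ only along directions where the regressor is uniformly bounded below, i.e.\ under exactly the PE hypothesis the theorem reserves for its convergence clause. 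Along directions excited weakly or intermittently, $\tilde{\boldsymbol\ell}$ can drift without bound. This is the classical parameter--drift phenomenon, and normalized gradient laws do \emph{not} escape it; that is why leakage, projection and dead zones exist. Falling back on projection or $\sigma$--modification proves a different theorem, for a different update law.

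The command--ball step fails for the same reason. Integrating \eqref{eq:ft_iss} gives only the mean--square bound $\int_{0}^{T}(\lVert\dot{\mathbf q}\rVert^{2}-c\bar b^{2})\,dt\le V_{\mathrm{aug}}(0)$, with your constant $c$. Time spent inside the ball recharges $V_{\mathrm{aug}}$, which then pays for later excursions. So ``it can only leave at a rate that makes $V_{\mathrm{aug}}$ decrease'' does not prevent repeated exits. Even granting $V_{\mathrm{aug}}\le\bar V$ and $\lvert\tfrac{d}{dt}\lVert\dot{\mathbf q}\rVert^{2}\rvert\le L$, the argument yields only $\lVert\dot{\mathbf q}\rVert^{2}\le c\bar b^{2}+\sqrt{2L\bar V}$, not $O(\kappa\bar b^{2})$.

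Your uniform--continuity step is circular: bounded $\ddot{\mathbf q}$ needs bounded $\hat J$ and $\mathbf F$, which is the boundedness being proved. The PE paragraph inherits both problems. It also treats the tracking term as a vanishing perturbation, which is unjustified. PE itself requires the command not to vanish, and for small commands the unnormalized tracking term $\Gamma Y^{\top}\mathbf F\dot{\mathbf q}^{\top}$ is first order in $\dot{\mathbf q}$, while the restoring part of the prediction term is second order.

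One way to make the argument close, as a modified statement, is a dead zone. Since $(\tilde J\dot{\mathbf q})^{\top}\boldsymbol\varepsilon=\lVert\boldsymbol\varepsilon\rVert^{2}-\mathbf b^{\top}\boldsymbol\varepsilon\ge\lVert\boldsymbol\varepsilon\rVert(\lVert\boldsymbol\varepsilon\rVert-\bar b)$, switching the prediction term off whenever $\lVert\boldsymbol\varepsilon\rVert\le\bar b$ gives $\dot V_{\mathrm{aug}}\le-\lVert\dot{\mathbf q}\rVert^{2}$. The Lyapunov/Barbalat argument of Section~2 then goes through.
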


\noindent The proof follows that of Section~2; only the bias term is new, and it enters
solely through the force/torque prediction error \eqref{eq:ft_pred}. Concluding regulation of the task from $\dot{\mathbf{q}} \to 0$ requires $\hat{J}$
to retain full row rank, and full identification $\hat{\boldsymbol{\ell}} \to
\boldsymbol{\ell}$ requires excitation.

\subsection{Remarks}

\begin{remark}[What is inherited and what is new]
The rigid case inherits Section~2 without modification to the update law,
the Lyapunov function, or the cross--term cancellation. The two substantive changes
are interpretive, not structural: the task feedback is a value gradient
\eqref{eq:ft_force}, and the prediction error is a force/torque residual
\eqref{eq:ft_pred} rather than a differentiated position. The only change to the
certificate itself is the bias term in \eqref{eq:ft_vdot}, which floors the
achievable identification at $O(\bar b/\sigma_{\min})$ but does not affect
boundedness or the sign of the leading terms. Because no task--space position is
measured, the scheme requires none of the exteroceptive (typically visual) sensing
that the classical adaptive--Jacobian derivations assume.
\end{remark}

\begin{remark}[The excitation must be commanded]
\label{rem:ft_excitation}
Equation \eqref{eq:ft_pred} constrains $\tilde{J}$ only along the commanded
direction $\dot{\mathbf{q}}$: the prediction error carries information about the
offset only in the directions the motion excites. In the rigid case the realized
motion is the commanded motion, so the excitation must be commanded. A compliant
loop, by contrast, generates reaction motion the command never issues; this
passive excitation is exercised in Section~\ref{sec:results}.
\end{remark}

\begin{remark}[The static wrench balance does not identify the offset]
\label{rem:ft_static}
With a force/torque sensor in hand, the obvious alternative to adaptation is
algebraic: an ideal point contact at the tip satisfies $\mathbf{M} =
\mathbf{r}\times\mathbf{F}$, so one might simply solve the instantaneous wrench
balance for the offset. This fails for a physical reason: the measured wrist
moment is not a pure lever moment. The contact also transmits a couple ---
friction, the distributed contact patch, the constraint reactions --- which the
instantaneous balance cannot separate from $\mathbf{r}\times\mathbf{F}$. An
estimator driven by the static residual therefore has its zero in the wrong
place: at the true offset the residual equals the contact couple rather than
vanishing, and the estimate is driven through the truth rather than to it. The
prediction error \eqref{eq:ft_pred} is immune to this contamination because it
tests candidate offsets against motion: the offset error enters the sensor rate
through the $\boldsymbol{\omega}\times\tilde{\boldsymbol{\ell}}$
signature, with which the quasi--static contact couple does not co--vary.
Identification information lives in the covariation of the wrench with the
motion, not in the instantaneous wrench.
\end{remark}

\begin{remark}[Structural observability of the tool offset]
The factorization \eqref{eq:ft_factor} has a structural observability limit
that is independent of excitation: an offset that does not alter any contact
lever produces no signal in $Y$. The axial (length) component of $\hat{\boldsymbol{\ell}}$ under
rotation--free motion is an example: it sweeps no lever, and so lies in the
unexcited subspace of Theorem~\ref{thm:ft_rigid} for a pure insertion push.
\end{remark}

\section{The Compliant Case}
\label{sec:ft_compliant}

For contact against a stiff environment one rarely drives a manipulator to track the
commanded task velocity; normally, a compliance (admittance) loop is placed between
command and motion to regulate the contact forces. We implement this as a virtual
mass--damper driven by the value force and the sensed contact wrench:
\begin{equation}
    M\ddot{\mathbf{q}} + (I+\boldsymbol{\beta})\dot{\mathbf{q}}
      = \hat{J}^{\top}\mathbf{F} - J_{\mathrm{arm}}^{\top}\mathbf{w},
    \label{eq:cp_admittance}
\end{equation}
with $M = M^{\top}\succ 0$ the virtual inertia, $\boldsymbol{\beta} =
\boldsymbol{\beta}^{\top}\succeq 0$ the damping, and $\mathbf{F} =
-\nabla_{\mathbf{s}}V$ the virtual task force of \eqref{eq:ft_force}. $\mathbf{F}$ takes the place of the more usual virtual stiffness. Note that this is a
joint--space law: the value force enters through $\hat{J}^{\top}$, and the contact
wrench through the transpose of the (known) manipulator Jacobian $J_{\mathrm{arm}}$, so
that $J_{\mathrm{arm}}^{\top}\mathbf{w}$ is the joint torque vector the wrench exerts on the
arm. Crucially, $\mathbf{w}$ is the wrench measured at the wrist, so neither it
nor $J_{\mathrm{arm}}$ depends on the unknown tool length. The virtual mass--damper carries
the kinetic storage
\begin{equation}
    \mathcal{H} = \tfrac{1}{2}\dot{\mathbf{q}}^{\top}M\dot{\mathbf{q}} \;\ge\; 0 .
    \label{eq:cp_kinetic}
\end{equation}

The contact is a compliant element: at penetration $\boldsymbol{\delta}$ it develops
the elastic wrench $\mathbf{w} = K_{c}\boldsymbol{\delta}$, with $K_{c} =
K_{c}^{\top}\succ 0$, and stores the non--negative energy
\begin{equation}
    \mathcal{S}_{c} = \tfrac{1}{2}\boldsymbol{\delta}^{\top}K_{c}\boldsymbol{\delta}
      = \tfrac{1}{2}\mathbf{w}^{\top}K_{c}^{-1}\mathbf{w} \;\ge\; 0 ,
    \label{eq:cp_storage_def}
\end{equation}
at the rate of the contact power. Because the tool is rigid, that power equals the
power the arm delivers at the wrist:
\begin{equation}
    \dot{\mathcal{S}}_{c} = \mathbf{w}^{\top}J_{\mathrm{arm}}\dot{\mathbf{q}} .
    \label{eq:cp_storage}
\end{equation}
The virtual mass--damper \eqref{eq:cp_admittance} and the elastic contact are thus
two coupled non--negative storages: the power the mass--damper delivers through
the joint torque $J_{\mathrm{arm}}^{\top}\mathbf{w}$ is exactly the power the
contact stores,
$\dot{\mathbf{q}}^{\top}J_{\mathrm{arm}}^{\top}\mathbf{w} =
\mathbf{w}^{\top}J_{\mathrm{arm}}\dot{\mathbf{q}}$.

\subsection{Certificate}

Add both mechanical storage terms to the augmented function of
Section~\ref{sec:ft_rigid},
\begin{equation}
    V_{\mathrm{tot}}
      = \underbrace{V(\mathbf{s})
        + \tfrac{1}{2}\tr(\tilde{\boldsymbol{\ell}}^{\top}\Gamma^{-1}
          \tilde{\boldsymbol{\ell}})}_{V_{\mathrm{aug}}\ \text{of \eqref{eq:ft_lyap}}}
        \;+\; \tfrac{1}{2}\dot{\mathbf{q}}^{\top}M\dot{\mathbf{q}}
        \;+\; \mathcal{S}_{c},
    \label{eq:cp_lyap}
\end{equation}
and keep the same composite law \eqref{eq:ft_composite}. Differentiate along
\eqref{eq:cp_admittance}. As in the rigid case $\dot V(\mathbf{s}) =
-\mathbf{F}^{\top}J\dot{\mathbf{q}}$; the kinetic term is, by \eqref{eq:cp_admittance},
\begin{equation}
    \tfrac{d}{dt}\big(\tfrac{1}{2}\dot{\mathbf{q}}^{\top}M\dot{\mathbf{q}}\big)
      = \mathbf{F}^{\top}\hat{J}\dot{\mathbf{q}}
        - \dot{\mathbf{q}}^{\top}(I+\boldsymbol{\beta})\dot{\mathbf{q}}
        - \dot{\mathbf{q}}^{\top}J_{\mathrm{arm}}^{\top}\mathbf{w};
    \label{eq:cp_kinrate}
\end{equation}
and the contact term is \eqref{eq:cp_storage}. The last term
of \eqref{eq:cp_kinrate} and $\dot{\mathcal{S}}_{c}$ sum to zero,
$-\dot{\mathbf{q}}^{\top}J_{\mathrm{arm}}^{\top}\mathbf{w} + \dot{\mathcal{S}}_{c}
= 0$. The drive produces the parametric cross--term: the first term of
\eqref{eq:cp_kinrate} combines with $\dot V(\mathbf{s})$,
$-\mathbf{F}^{\top}J\dot{\mathbf{q}} + \mathbf{F}^{\top}\hat{J}\dot{\mathbf{q}}
= -\mathbf{F}^{\top}\tilde{J}\dot{\mathbf{q}}$, exactly the rigid cross--term.
That cross--term is cancelled by the tracking term of \eqref{eq:ft_composite}
by the identical scalar--transpose argument of Section~\ref{sec:ft_rigid}, and the
prediction term collapses through \eqref{eq:pred_trace_identity} to $-\kappa\lVert
\tilde{J}\dot{\mathbf{q}}\rVert^{2} - \kappa(\tilde{J}\dot{\mathbf{q}})^{\top}
\mathbf{b}$, using \eqref{eq:ft_pred} for $\boldsymbol{\varepsilon}$. Only the damping
dissipation survives, leaving
\begin{equation}
    \boxed{\;
    \dot{V}_{\mathrm{tot}}
      = -\,\dot{\mathbf{q}}^{\top}(I+\boldsymbol{\beta})\dot{\mathbf{q}}
        \;-\; \kappa(t)\lVert\tilde{J}\dot{\mathbf{q}}\rVert^{2}
        \;-\; \kappa(t)(\tilde{J}\dot{\mathbf{q}})^{\top}\mathbf{b}
    \;}
    \label{eq:cp_vdot}
\end{equation}
Because $I+\boldsymbol{\beta}\succeq I$, the leading term is at most
$-\lVert\dot{\mathbf{q}}\rVert^{2}$, and completing the square on the bias exactly as
in \eqref{eq:ft_iss} gives
\begin{equation}
    \dot{V}_{\mathrm{tot}} \;\le\; -\lVert\dot{\mathbf{q}}\rVert^{2}
      + \tfrac{\kappa}{4}\lVert\mathbf{b}\rVert^{2}.
    \label{eq:cp_iss}
\end{equation}

\begin{theorem}[Compliant force/torque--only identification]
\label{thm:cp}
Under the second--order admittance \eqref{eq:cp_admittance} with $M\succ 0$ and
$\boldsymbol{\beta}\succeq 0$; the value force \eqref{eq:ft_force}; and the composite
law \eqref{eq:ft_composite}, with $\lVert\mathbf{b}\rVert\le\bar b$, the total energy
\eqref{eq:cp_lyap} satisfies \eqref{eq:cp_vdot}--\eqref{eq:cp_iss}
unconditionally. Consequently $\mathbf{s}$, $\tilde{\boldsymbol{\ell}}$, and
$\dot{\mathbf{q}}$ are bounded; $\dot{\mathbf{q}}$ enters and remains in a ball of
radius $O(\sqrt{\kappa}\,\bar b)$; by the invariance argument the task
regulates to a neighborhood of the setpoint whose size is set by the bias floor,
provided --- as in the rigid case --- $\hat{J}$ retains full row rank; and, under
a persistency--of--excitation condition on the
excited directions of $Y$, $\tilde{\boldsymbol{\ell}}$ converges to a residual set of
size $O(\bar b/\sigma_{\min})$ on those directions. In the limit
$M,\boldsymbol{\beta}\to 0$ the kinetic storage and the damping vanish and
\eqref{eq:cp_vdot} returns Theorem~\ref{thm:ft_rigid} verbatim.
\end{theorem}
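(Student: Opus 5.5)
The plan is to treat $V_{\mathrm{tot}}$ of \eqref{eq:cp_lyap} as a storage function and follow the bookkeeping already carried out before the statement. I would then make the consequences rigorous in the same order as in Theorem~\ref{thm:ft_rigid}.

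\textbf{Step 1: the identity \eqref{eq:cp_vdot}.} Differentiate $V_{\mathrm{tot}}$ term by term along \eqref{eq:cp_admittance}.
\begin{itemize}
\item The value term gives $-\mathbf{F}^{\top}J\dot{\mathbf{q}}$.
\item The kinetic term gives \eqref{eq:cp_kinrate}, obtained by left--multiplying \eqref{eq:cp_admittance} by $\dot{\mathbf{q}}^{\top}$ and using $M=M^{\top}$ constant.
\item The contact term gives \eqref{eq:cp_storage}, which cancels the wrench power by the rigid--tool power identity.
\item Writing $J=\hat J+\tilde J$ leaves $-\mathbf{F}^{\top}\tilde J\dot{\mathbf{q}}$. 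This is cancelled by the tracking part of \eqref{eq:ft_composite} via the trace identity of Section~2: $\tr(\tilde{\boldsymbol\ell}^{\top}Y^{\top}\mathbf{F}\dot{\mathbf{q}}^{\top})=\dot{\mathbf{q}}^{\top}\tilde J^{\top}\mathbf{F}$.
\end{itemize}
One point differs from the rigid case: $\dot{\mathbf{q}}$ is now a state of the admittance rather than equal to $\hat J^{\top}\mathbf{F}$. The cancellation must therefore be verified with a general $\dot{\mathbf{q}}$. It holds because the tracking term uses the realized $\dot{\mathbf{q}}$, not the command. The prediction part collapses via \eqref{eq:pred_trace_identity} with $\boldsymbol\varepsilon$ from \eqref{eq:ft_pred}. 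No step uses a bound on $M$, $\boldsymbol\beta$ or $K_c$, which is the sense of ``unconditionally''. Completing the square and using $I+\boldsymbol\beta\succeq I$ then gives \eqref{eq:cp_iss}.

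\textbf{Step 2: boundedness and the ball.} With $\kappa\le\gamma_p/\varepsilon_{\text{norm}}$ from \eqref{eq:kappa_def}, the bias contribution is uniformly bounded by $c\,\bar b^{2}$. So $\dot V_{\mathrm{tot}}<0$ whenever $\|\dot{\mathbf{q}}\|>\tfrac{1}{2}\sqrt{\kappa}\,\bar b$.

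The main obstacle sits here. The decrease controls only $\dot{\mathbf{q}}$, while $V_{\mathrm{tot}}$ also depends on $\mathbf{s}$, $\tilde{\boldsymbol\ell}$ and $\mathbf{w}$. Outside the ball $V_{\mathrm{tot}}$ decreases. Inside the ball it may grow only slowly, at rate at most $c\bar b^{2}$. Integrating gives
\[
\int_0^T\|\dot{\mathbf{q}}\|^{2}\,dt\le V_{\mathrm{tot}}(0)+c\,\bar b^{2}T .
\]
This shows $\dot{\mathbf{q}}$ is ultimately bounded in mean square. It does not by itself give uniform boundedness of $V_{\mathrm{tot}}$.

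To close this gap I would try the following, in order:
\begin{enumerate}
\item Use the ultimate--boundedness argument only on the $\dot{\mathbf{q}}$--component, exploiting the kinetic term $\tfrac12\dot{\mathbf{q}}^{\top}M\dot{\mathbf{q}}\le V_{\mathrm{tot}}$.
\item Alternatively, take a $\sigma$--modification/projection bound on $\hat{\boldsymbol\ell}$ as a standing assumption, which is standard in the robust adaptive literature.
\item Otherwise, state boundedness in the bias--free case $\mathbf{b}=0$. There $V_{\mathrm{tot}}$ is nonincreasing, and $V(\mathbf{s})$, $\tilde{\boldsymbol\ell}$, $\dot{\mathbf{q}}$ and $\mathbf{w}$ are immediately bounded, since every summand is nonnegative.
\end{enumerate}
I would present route 3 first as the clean result and then treat $\bar b$ as a perturbation over finite horizons. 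This part of the statement is the weakest and would need that hedge.

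\textbf{Step 3: regulation, identification, and the limit.}

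For regulation, bounded states make $\ddot{\mathbf{q}}$ bounded through \eqref{eq:cp_admittance}, so $\dot{\mathbf{q}}$ is uniformly continuous. Barbalat (or LaSalle when $\mathbf{b}=0$) then gives $\dot{\mathbf{q}}\to0$ up to the bias floor, and likewise $\ddot{\mathbf{q}}\to0$ by a second Barbalat step on $\dot{\mathbf{q}}$. Equation \eqref{eq:cp_admittance} then forces $\hat J^{\top}\mathbf{F}\to J_{\mathrm{arm}}^{\top}\mathbf{w}$. Regulation to a neighbourhood of $\mathbf{s}^\star$ follows from full row rank of $\hat J$, the bias floor fixing the neighbourhood's size.

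For identification, I would rewrite the prediction part of \eqref{eq:ft_composite} as a perturbed linear regression $\dot{\tilde{\boldsymbol\ell}}=-\kappa\Gamma\Phi^{\top}\Phi\tilde{\boldsymbol\ell}+(\text{tracking})+(\text{bias})$. The tracking part vanishes as $\dot{\mathbf{q}}\to0$ in mean square. Persistency of excitation of $\Phi$ on its excited subspace then gives exponential stability there, with a residual of size $O(\bar b/\sigma_{\min})$ by the standard PE robustness lemma. On the complement the update has no component, so $\tilde{\boldsymbol\ell}$ is unchanged there.

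For the limit $M,\boldsymbol\beta\to0$: \eqref{eq:cp_admittance} reduces to a first--order law, $\mathcal H\to0$, and \eqref{eq:cp_vdot} becomes \eqref{eq:ft_vdot}. The contact storage pair still telescopes to zero.
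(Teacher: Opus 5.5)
Your Step~1 is the paper's proof essentially verbatim: the same three algebraic cancellations. You are right that they hold for the realized admittance state $\dot{\mathbf q}$ without using $\dot{\mathbf q}=\hat J^{\top}\mathbf F$.

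One caveat applies to both you and the paper. $\dot V(\mathbf s)=-\mathbf F^{\top}J\dot{\mathbf q}$ is consistent with \eqref{eq:ft_pred} only if $\mathbf b$ is purely a rate--measurement discrepancy. If $\mathbf b$ is part of the actual evolution of the sensed $\mathbf s$ on which $\mathbf F$ is evaluated, as its contact--model description suggests, a term $-\mathbf F^{\top}\mathbf b$ appears. That term is not proportional to $\dot{\mathbf q}$ and cannot be absorbed by completing the square.

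\textbf{Boundedness (your Step~2).} Your diagnosis is correct, and this is a genuine gap that the paper does not close. The paper's proof gets boundedness from ``$V_{\mathrm{tot}}\ge0$ is a sum of non--negative storages,'' but that is only a lower bound. For $\mathbf b\neq0$, \eqref{eq:cp_iss} allows $\dot V_{\mathrm{tot}}>0$ on $\{\lVert\dot{\mathbf q}\rVert\le\tfrac12\sqrt{\kappa}\,\bar b\}$. That set is unbounded in $(\mathbf s,\tilde{\boldsymbol{\ell}},\mathbf w)$, so no ultimate--boundedness theorem applies, and $V_{\mathrm{tot}}$ may grow at rate up to $\tfrac{\kappa}{4}\bar b^{2}$ (classical parameter drift). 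Two consequences follow:
\begin{itemize}
\item ``Enters and remains in the ball'' is more than \eqref{eq:cp_iss} delivers.
\item Barbalat gives no limit ``up to the bias floor,'' because $\int_0^\infty\lVert\dot{\mathbf q}\rVert^{2}\,dt$ may diverge. Your time--averaged bound is what actually follows.
\end{itemize}
Your three fixes fare as follows:
\begin{itemize}
\item Route~1 is circular. $\tfrac12\dot{\mathbf q}^{\top}M\dot{\mathbf q}\le V_{\mathrm{tot}}$ bounds $\dot{\mathbf q}$ only once $V_{\mathrm{tot}}$ is bounded. \eqref{eq:cp_admittance} cannot bound it independently, because its forcing depends on the possibly drifting $\hat{\boldsymbol{\ell}}$, $\mathbf s$, $\mathbf w$.
\item Route~2 bounds $\tilde{\boldsymbol{\ell}}$ by construction. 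But \eqref{eq:cp_iss} is negative only in $\dot{\mathbf q}$, so you would still need a strict (cross--term) Lyapunov function to bound $\mathbf s$ and $\mathbf w$.
\item Route~3 is clean, but it is a weaker theorem than the one posed.
\end{itemize}

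\textbf{Regulation (your Step~3).} This argument fails as written, and the paper's appeal to the rigid--case invariance argument passes over the same point. You correctly reach $\hat J^{\top}\mathbf F\to J_{\mathrm{arm}}^{\top}\mathbf w$. With $\hat J$ of uniformly full row rank, this gives $\mathbf F\to0$ if and only if $J_{\mathrm{arm}}^{\top}\mathbf w\to0$. Unlike the rigid case, the compliant limit set is a balance between the value force and the contact wrench; a peg stalled against the workpiece face is such a point. Its offset from $\mathbf s^{\star}$ is set by the contact torque, not by $\bar b$. Full row rank of $\hat J$ cannot remove it, so an additional hypothesis is required.

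\textbf{Identification (your Step~3).} This argument also fails as written. $\Phi(\mathbf q,\dot{\mathbf q})$ is linear in $\dot{\mathbf q}$ and $\kappa\le\gamma_p/\varepsilon_{\mathrm{norm}}$. So the regression term $\kappa\Gamma\Phi^{\top}\Phi\tilde{\boldsymbol{\ell}}$ is $O(\lVert\dot{\mathbf q}\rVert^{2})$, while the tracking term $\Gamma Y^{\top}\mathbf F\dot{\mathbf q}^{\top}$ is $O(\lVert\dot{\mathbf q}\rVert)$. Letting $\dot{\mathbf q}\to0$ therefore does not make the tracking term a negligible perturbation: it is of lower order than the term you rely on. Moreover, uniform PE of $\Phi$ is incompatible with $\dot{\mathbf q}\to0$. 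In the bias--free case, $\dot{\mathbf q}\in L_2$ forces $\int_t^{t+T}\Phi^{\top}\Phi\,d\tau\to0$ for bounded $Y$. The PE argument must either keep the tracking term inside the coupled $(\dot{\mathbf q},\tilde{\boldsymbol{\ell}})$ error system, or be restricted to a finite excitation window during the transient.
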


\begin{proof}
The three cancellations above are algebraic identities --- the contact coupling by
power continuity, the drive cross--term by $J=\hat{J}+\tilde{J}$, and the parametric
cross--term by the tracking law together with \eqref{eq:pred_trace_identity} --- using
no positivity assumption on any interaction term. They leave \eqref{eq:cp_vdot}, which
is $\le 0$ outside the $O(\sqrt{\kappa}\,\bar b)$ ball by \eqref{eq:cp_iss}.
Boundedness follows because $V_{\mathrm{tot}}\ge 0$ is a sum of non--negative
storages; the Barbalat/LaSalle and observability conclusions are those of
Theorem~\ref{thm:ft_rigid}, now with the velocity state $\dot{\mathbf{q}}$ adjoined to
the invariant--set argument.
\end{proof}

\subsection{Remarks}

\begin{remark}[The stability ceiling is a virtual--mass bound]
\label{rem:cp_ceiling}
Equation \eqref{eq:cp_vdot} certifies the compliant loop unconditionally in
continuous time. That is the classical impedance/admittance passivity statement, which is optimistic: a sampled admittance against a stiff contact is passive only
if the virtual inertia is large enough relative to the contact stiffness and the
control period $T$. The true residual condition is therefore a lower bound on the
virtual mass,
\begin{equation}
    M \;\succeq\; M_{\min}(K_{c},\,T),
    \label{eq:cp_mass}
\end{equation}
of the form established in the sampled--data passivity literature \cite{colgate1988robust, colgate1997passivity}: a virtual mass that is too small lets the discretized mass--contact interconnection inject energy each step.
\end{remark}

\begin{remark}[What is inherited and what is new]
The composite update law, the augmented Lyapunov function, and the scalar--transpose
cross--term cancellation are inherited from Section~\ref{sec:ft_rigid} unchanged; the
identification content --- the prediction dissipation
$-\kappa\lVert\tilde{J}\dot{\mathbf{q}}\rVert^{2}$ and its $O(\bar b/\sigma_{\min})$
bias floor --- is identical. What the compliance adds is a passive mechanical port: a
kinetic storage \eqref{eq:cp_kinetic} that both cancels the drive cross--term and, with
the contact storage, forms a power--continuous interconnection, plus the genuine
dissipation $-\dot{\mathbf{q}}^{\top}(I+\boldsymbol{\beta})\dot{\mathbf{q}}$. That the
contact enters as its true joint torque $J_{\mathrm{arm}}^{\top}\mathbf{w}$ is what
makes the interconnection power--continuous, so the coupling cancels exactly rather
than merely being bounded.
\end{remark}

%% file: Results.tex
\section{Results}
\label{sec:results}

We validate the force/torque--only identification scheme of
Sections~\ref{sec:ft_rigid}--\ref{sec:ft_compliant} in simulation on the
canonical 3--D version of the planar peg--in--hole task of Figure~\ref{peginhole}. The
controller is given access only to the wrist force/torque sensor and the joint
encoders; no exteroceptive measurement of the tool tip is available, so the task
is driven entirely through the sensor vector
$\mathbf{s} = [\,d,\ \mathbf{F}_{\mathrm{lat}},\ \mathbf{M}\,]^{\top}$ defined there. The tool
offset $\boldsymbol{\ell}$ is unknown to the adaptation and is initialized with a
$5\,\mathrm{mm}$ lateral error, so that any successful insertion must be
accomplished ``by feel'' while the offset is being identified online.

Each of the experiments below isolates a single prediction of the theory. First,
that the peg can be seated at all from force/torque feedback alone, with the
estimated Jacobian retaining the rank required by the certificate
(Theorems~\ref{thm:ft_rigid}--\ref{thm:cp}). Second, that
identification is confined to the directions the motion excites --- and,
strikingly, that the compliant loop itself supplies the needed excitation organically. Third, that the composite law adapts robustly across a wide range of adaptation gain.

\subsection{Simulation setup}
\label{sec:results_setup}

The manipulator is a seven--degree--of--freedom Franka Emika Panda simulated in
MuJoCo, with a rigid peg held at the wrist and a fixed workpiece containing a
square bore. The arm kinematics $J_{\mathrm{arm}}(\mathbf{q})$ are treated as
known; the uncertainty is confined to the wrist--to--tip offset
$\boldsymbol{\ell}$, exactly as in the factorization \eqref{eq:ft_factor}. The
bore walls sit at $\pm 11\,\mathrm{mm}$ about the hole center, the hole is
$50\,\mathrm{mm}$ deep, and the insertion target is $d = 45\,\mathrm{mm}$. Contact
is regulated by the second--order admittance \eqref{eq:cp_admittance}. The virtual
task force is the value gradient $\mathbf{F} = -\nabla_{\mathbf{s}}V$ of
\eqref{eq:ft_force} and the offset is updated by the composite law
\eqref{eq:ft_composite} with the normalized--LMS gain \eqref{eq:kappa_def}. The virtual time constants are $2.0\times 10^{-3}\,\mathrm{s}$ on
the translational channels and $1.0\times 10^{-2}\,\mathrm{s}$ on the angular
channels, with damping $\boldsymbol{\beta} = 0.072$ translational and $0$
angular, wrench--to--velocity conversion gains of $2\times 10^{-4}$ on all
compliant channels, a simulation timestep of $10^{-3}\,\mathrm{s}$, a velocity
limit of $0.05\,\mathrm{m/s}$, and a normalized prediction step $\eta = 0.5$ with
regularizer $\varepsilon_{\mathrm{norm}} = 10^{-4}$. No rotational dither is
commanded: the rotational excitation the identification requires arrives
organically, through the admittance controller's reaction to the wrench induced by the insertion.

An insertion is declared successful only when three sensor conditions hold
simultaneously: the insertion depth reaches the $45\,\mathrm{mm}$ target, the
contact force magnitude stays below $50\,\mathrm{N}$, and the wrist moment
magnitude stays below $2\,\mathrm{N{\cdot}m}$. A separate kinematic gate requires
$\lambda_{\min}(\hat{J}\hat{J}^{\top}) \ge 10^{-6}$ throughout; this is the online
proxy for the full--row--rank condition under which
Theorem~\ref{thm:ft_rigid} concludes regulation of the task from
$\dot{\mathbf{q}} \to 0$. All success criteria are functions of $\mathbf{s}$
alone, consistent with the requirement that no exteroceptive tip position is needed.

\subsection{The peg seats by feel}
\label{sec:results_seat}

Figure~\ref{fig:rollout} shows a representative successful rollout. The peg tip descends from its start pose, makes contact
at the bore entrance, and is driven to the target depth using only the sensed
depth, lateral force, and wrist moment; all three success channels close
simultaneously and success is declared at $t \approx 5.1\,\mathrm{s}$
(bottom--right panel). Two panels are worth singling out. The insertion--depth
panel overlays the sensor--reported depth $d$ against the depth reconstructed from
forward kinematics: the two agree closely, which bounds the bias $\mathbf{b}$ of
\eqref{eq:ft_pred} and is the empirical statement that the
$O(\bar b/\sigma_{\min})$ floor of Theorem~\ref{thm:ft_rigid} is small on this
task. The stability--margin panel shows $\lambda_{\min}(\hat{J}\hat{J}^{\top})$
remains above $10^{-6}$ for the entire insertion, so the estimated
Jacobian never loses rank.
The contact force and wrist moment stay inside the
$\pm 50\,\mathrm{N}$ and $\pm 2\,\mathrm{N{\cdot}m}$ limits (the brief initial
transient is a collision with the table face rather than the bore); during the
post--seat dwell the loaded peg rests against the bore wall with a bounded moment
drift (peak $3.65\,\mathrm{N{\cdot}m}$).

\begin{figure}[htbp]
  \centering
  \includegraphics[width=\textwidth]{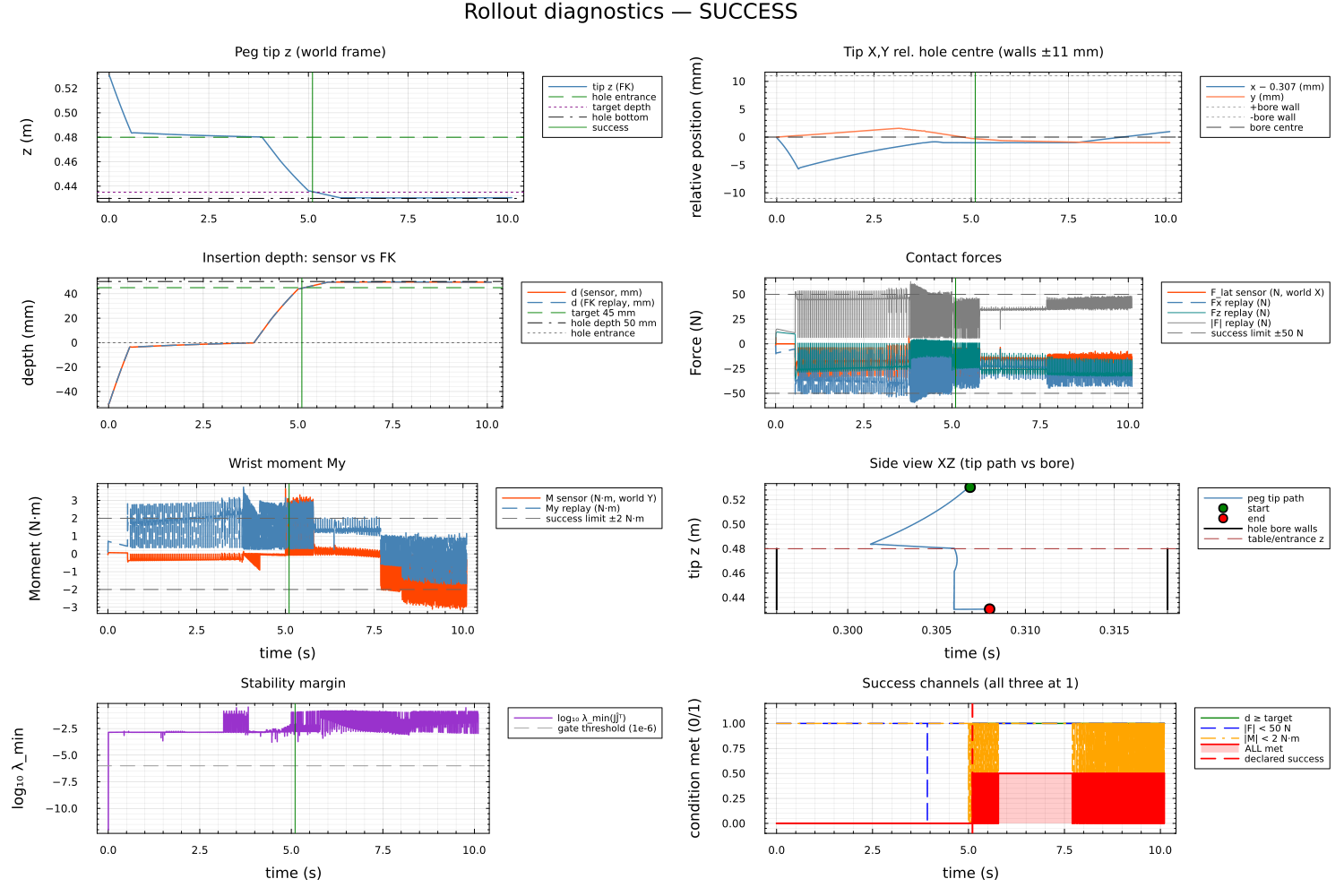}
  \caption{A successful force/torque--only insertion. The peg is seated using only
  the sensor vector $\mathbf{s} = [\,d,\ \mathbf{F}_{\mathrm{lat}},\ \mathbf{M}\,]^{\top}$: no tip
  position is measured. Sensor depth tracks the forward--kinematic depth (small
  bias $\mathbf{b}$), the contact force and wrist moment remain within the success
  limits, and the stability margin $\lambda_{\min}(\hat{J}\hat{J}^{\top})$ stays
  above the $10^{-6}$ rank gate throughout, so the full--row--rank hypothesis of
  Theorem~\ref{thm:ft_rigid} holds along the trajectory. All three success
  channels close simultaneously at $t \approx 5.1\,\mathrm{s}$.}
  \label{fig:rollout}
\end{figure}

\subsection{Identification on the excited directions}
\label{sec:results_observability}

Remark~\ref{rem:ft_excitation} and the structural--observability remark of
Section~\ref{sec:ft_rigid} together make a two--part prediction. In the rigid
case, the prediction error
$\boldsymbol{\varepsilon} = \tilde{J}\dot{\mathbf{q}}$ constrains $\tilde{J}$ only
along the directions the motion excites, and a pure insertion push commands no
rotation, so the axial (tool--length) component lies in the
unexcited subspace. Under a stiff servo loop on this task we measure a
lateral--to--axial Gram separation of roughly five orders of magnitude. But a compliant loop's
 reaction motion supplies the necessary excitation.
Figure~\ref{fig:observability} shows this. Under the second--order
admittance, with no commanded rotation, the contact moment
rocks the peg, and the axial eigenvalue of the regressor Gram $\langle Y^{\top}Y\rangle$ rises
during the contact, collapsing the lateral--to--axial separation to
roughly $5\times 10^{1}$. The partition survives in weakened form --- the lateral
direction still carries $O(1)$ excitation and identifies quickly, the axial
direction remains $\sim\!50\times$ poorer and identifies slowly if at all, and the
out--of--plane direction is even slower, but the structural zero is gone.

\begin{figure}[htbp]
  \centering
  \includegraphics[width=0.8\textwidth]{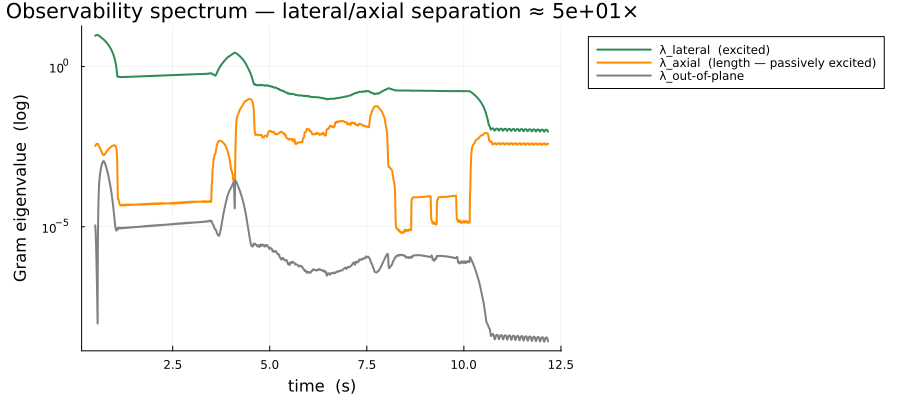}
  \caption{Observability spectrum under the second--order admittance: eigenvalues
  of the regressor Gram $\langle Y^{\top}Y\rangle$ over the insertion, with no
  commanded rotation. The lateral direction carries $O(1)$ excitation throughout.
  The axial (tool--length) direction --- structurally unexcited under a rigid
  insertion push (separation $\sim\!4\times 10^{5}$ on the stiff loop) --- is
  passively excited by the compliance yielding to the contact moment, rising
  during the contact--rich phases and collapsing the separation to
  $\sim\!5\times 10^{1}$: the free excitation of
  Remark~\ref{rem:ft_excitation}, observed directly.}
  \label{fig:observability}
\end{figure}

\subsection{Robustness across adaptation gain}
\label{sec:results_gamma}

A recurring concern with prediction--error adaptation is that its behavior is tied
to the magnitude of the adaptation gain. The composite law addresses this through the
normalized--LMS gain \eqref{eq:kappa_def}, which renders the adaptation rate
insensitive to the magnitudes of the features and the command.
Figure~\ref{fig:gamma_sweep} exercises this by running the insertion at two values
of the adaptation metric separated by nearly two decades, $\Gamma = 100$ and
$\Gamma = 8000$, comparing the full composite law against a prediction--only
ablation, with the augmented energy
$V_{\mathrm{aug}} = \gamma_{d}V(\mathbf{s}) +
\tfrac{1}{2}\lVert\tilde{\boldsymbol{\ell}}\rVert^{2}/\Gamma$ traced on a log
scale. At both gains the composite trace descends to its floor and overlays the
prediction--only trace almost exactly: the tracking term neither destabilizes the
loop at low gain nor distorts the descent at high gain, and the seat transition is
gain--flat across the two decades, as the normalized gain \eqref{eq:kappa_def}
predicts.

\begin{figure}[htbp]
  \centering
  \includegraphics[width=\textwidth]{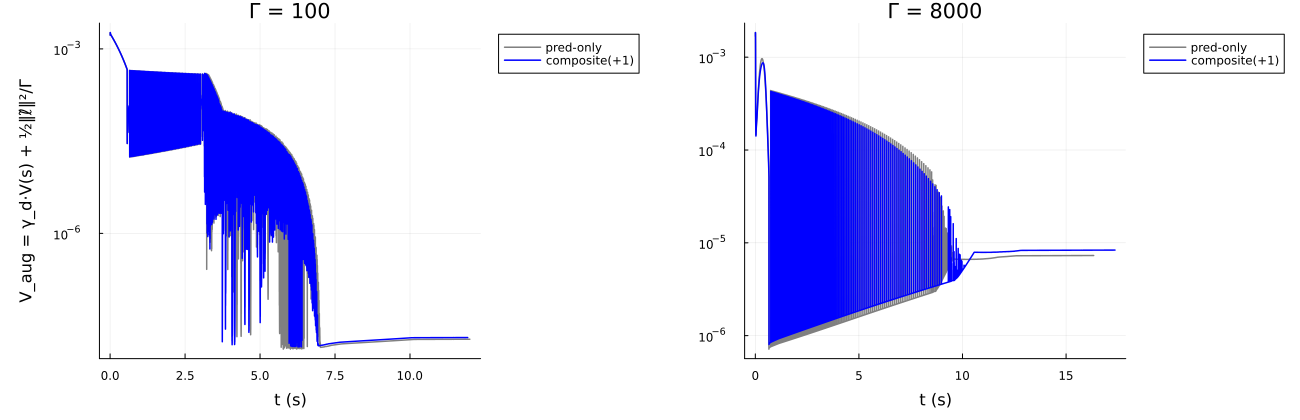}
  \caption{Augmented energy $V_{\mathrm{aug}}$ (log scale) across two decades of
  adaptation gain, $\Gamma = 100$ (left) and $\Gamma = 8000$ (right), for the
  composite law (blue) against a prediction--only ablation (grey). At both gains
  the composite trace descends to its floor and overlays the prediction--only
  trace: the tracking term, evaluated on the realized (admittance--state)
  velocity as the compliant certificate requires, adds no instability at either
  gain, and the seat transition is gain--flat per the normalized--LMS gain
  \eqref{eq:kappa_def}.}
  \label{fig:gamma_sweep}
\end{figure}

\subsection{Summary}
\label{sec:results_summary}

Taken together, the three experiments confirm the operation of the rigid
and compliant cases. The peg is seated using only $\mathbf{s} = [\,d,\
\mathbf{F}_{\mathrm{lat}},\ \mathbf{M}\,]^{\top}$, with the estimated Jacobian
holding the rank the regulation conclusion requires (Figure~\ref{fig:rollout},
Theorems~\ref{thm:ft_rigid}--\ref{thm:cp}). Identification lives on the excited
directions, and the compliant loop itself supplies excitation the command never
issues (Figure~\ref{fig:observability}, Remark~\ref{rem:ft_excitation}). The
composite law adapts successfully across two decades of adaptation gain
(Figure~\ref{fig:gamma_sweep}).

%% file: Future-work.tex
\section{Future Work}
\label{sec:future_work}

We hope to extend the ``touch only'' kinematic updates here to encompass fully tactile manipulation: we would like to infer as much as possible about a complex robotic manipulation task through proprioceptive sensors, force/torque sensors, and fingertip tactile sensors. This goal is motivated by the author's experience in trying to keep his ancient Ford F150 5.4L Triton farm truck on the road. The engineers at Ford seem to have a strong preference for designs that require the mechanic's arm to be stretched into the engine compartment and manipulate bolts that are impossible to see by the human eye without an endoscope or a significant disassembly of the engine compartment. Completing such tasks clearly does not require visual feedback. Indeed, many manipulation tasks are easily accomplished with one's eyes closed, but are virtually impossible with one's hands encased in thick gloves. We therefore believe that complete use of tactile and force sensing, not vision, should be the first step in solving robotic manipulation tasks.

We also hope to extend the notion of $\mathbf{F}$ to a general task--space policy. This is motivated by the author's experience with digging fencepost holes on his hobby farm. He has noticed that his ability to (say) pick up and swing a new pickaxe adapts surprisingly quickly -- the fact that the exact grip on the handle changes with every swing, and that the mass and length of the handle are different for every new tool can be adapted to in a few minutes at most, whereas understanding how to break a rock with said tool takes considerably longer to learn. This appears to point at a natural division of learning that seems under--explored in the literature. Reinforcement learning (today) typically does its adaptation entirely offline, during training, and any variations in the task or tool need to be baked into that training. This drives extremely large training sessions. Adaptive controls, on the other hand, in theory allow one to treat the tool and the task separately. If the RL policy for the task can be formulated directly as requirements on how the tooltip must be used, and the adaptive controls accommodates variations in tool geometry and dynamics online, the RL training requirements may be considerably reduced. 

%% file: Appendix-QP.tex
\section{Appendix: Two Families of RMRC Laws Derived as Quadratic Programs}

The resolved--motion rate control (RMRC) law \cite{whitney1969resolved} determines joint velocity $\dot{\mathbf{q}}$
to achieve a desired task--space velocity $\dot{\mathbf{x}}_d$.  There are two natural
quadratic programs (QPs) corresponding to two different priority structures. The derivation of RMRC as the solution to a QP appears to have been first observed by \cite{cheng1994redundant}, and expanded on by \cite{kanoun2011kinematic}, \cite{escande2014hierarchical}, and \cite{shankar2015quadratic}. One might therefore hope to derive the kinematic update law \eqref{eq:composite_law} as the solution to a QP, but this hope fails, for the tracking term, in an instructive manner.

\subsection{Family 1: Tracking as a Hard Constraint}

Minimize joint--velocity effort subject to exact task--space tracking:
\begin{align}
    \min_{\dot{\mathbf{q}}} &\quad \tfrac{1}{2}\,\dot{\mathbf{q}}^\top \dot{\mathbf{q}} \notag\\
    \text{subject to} &\quad \dot{\mathbf{x}}_d = \hat J(\mathbf{q})\,\dot{\mathbf{q}}.
    \label{eq:QP1}
\end{align}
The Lagrangian is
\[
    \mathcal{L}_1 = \tfrac{1}{2}\,\dot{\mathbf{q}}^\top \dot{\mathbf{q}} +
                    \boldsymbol{\lambda}^\top(\dot{\mathbf{x}}_d - \hat J\,\dot{\mathbf{q}}).
\]
Stationarity in $\dot{\mathbf{q}}$ gives $\dot{\mathbf{q}} = \hat J^\top \boldsymbol{\lambda}$, and
substituting into the constraint yields
$\boldsymbol{\lambda} = (\hat J \hat J^\top)^{-1}\dot{\mathbf{x}}_d$, so
\[
    \dot{\mathbf{q}} = \hat J^\top (\hat J \hat J^\top)^{-1}\dot{\mathbf{x}}_d = \hat J^+ \dot{\mathbf{x}}_d.
\]
This is the Moore--Penrose pseudoinverse solution.  The general solution includes
a null--space term projecting an auxiliary velocity $\dot{\mathbf{q}}_0$ (e.g., from an
obstacle--avoidance potential) without disturbing end--effector tracking:
\begin{equation}
    \dot{\mathbf{q}} = \hat J^+\,\dot{\mathbf{x}}_d + (I - \hat J^+ \hat J)\,\dot{\mathbf{q}}_0.
    \label{eq:nullspace}
\end{equation}

\subsection{Family 2: Tracking as a Soft Constraint}

Minimize tracking error with a regularization term, subject to hard constraints
$\mathbf{c}(\mathbf{q}, \dot{\mathbf{q}}) \leq 0$ (joint limits, obstacle avoidance, etc.):
\begin{align}
    \min_{\dot{\mathbf{q}}} &\quad \tfrac{1}{2}\,\|\dot{\mathbf{x}}_d - \hat J\,\dot{\mathbf{q}}\|^2 +
                         \tfrac{1}{2}\,\dot{\mathbf{q}}^\top W\,\dot{\mathbf{q}} \notag\\
    \text{subject to} &\quad \mathbf{c}(\mathbf{q},\dot{\mathbf{q}}) \leq 0.
    \label{eq:QP2}
\end{align}
Without the hard constraints, stationarity gives the damped least-squares
(Levenberg--Marquardt) solution
\begin{equation}
    \dot{\mathbf{q}} = (\hat J^\top \hat J + W)^{-1}\hat J^\top\,\dot{\mathbf{x}}_d,
    \label{eq:DLS}
\end{equation}
which avoids singularity--induced velocity blow--up at the cost of allowing
nonzero task--space error.  The hard constraints $c \leq 0$ are enforced by the
QP solver directly.

\subsection{Augmented QP Including Jacobian Estimation}

We now ask whether the adaptation law, like the control law, can be derived from an instantaneous optimization. The answer is instructive and, for the tracking term, negative: the per--sample program either assigns no adaptation at all or selects the destabilizing sign. We work through both outcomes, because the failure is itself the point --- optimality over the horizon being explicitly optimized carries no stability guarantee over any longer one. The prediction term, by contrast, is the exact solution of a per--sample QP (Section~\ref{sec:ident_qp}).

Consider the augmented Family-1 QP:
\begin{align}
    \min_{\dot{\mathbf{q}},\,\dot{\hat{\boldsymbol{\theta}}}} &\quad
        \tfrac{1}{2}\,\dot{\mathbf{q}}^\top \dot{\mathbf{q}} +
        \tfrac{1}{2}\,\operatorname{tr}\!\left(
            \dot{\hat{\boldsymbol{\theta}}}^\top \Gamma^{-1} \dot{\hat{\boldsymbol{\theta}}}\right) \notag\\
    \text{subject to} &\quad \dot{\mathbf{x}}_d = Y(\mathbf{q})\hat{\boldsymbol{\theta}}\,\dot{\mathbf{q}}
        \quad\text{(tracking with estimated Jacobian)} \notag\\
    &\quad \hat{\boldsymbol{\theta}} \in \Omega
        \quad\text{(projection constraint)},
    \label{eq:QPaug}
\end{align}
where $\Gamma \succ 0$ is a positive--definite adaptation gain matrix and $\Omega$ is a compact set chosen so that $\hat J$ retains full row rank for all $\hat{\boldsymbol{\theta}} \in \Omega$.

The second term in the objective penalizes large parameter updates weighted by $\Gamma^{-1}$.  This is the same metric that appears in the parameter--error term of the Lyapunov function
\[
    V = \tfrac{1}{2}\,\mathbf{e}^\top K\mathbf{e} + \tfrac{1}{2}\,
        \operatorname{tr}(\tilde{\boldsymbol{\theta}}^\top \Gamma^{-1} \tilde{\boldsymbol{\theta}}),
\]
so the program and the Lyapunov analysis price parameter motion identically --- which is what makes the comparison between their answers meaningful, though, as we now show, it does not make their answers agree.

\subsection{KKT Conditions}
\label{sec:kkt}

Form the Lagrangian:
\[
    \mathcal{L} = \tfrac{1}{2}\,\dot{\mathbf{q}}^\top \dot{\mathbf{q}} +
                  \tfrac{1}{2}\,\operatorname{tr}(\dot{\hat{\boldsymbol{\theta}}}^\top
                  \Gamma^{-1}\dot{\hat{\boldsymbol{\theta}}}) +
                  \boldsymbol{\lambda}^\top(\dot{\mathbf{x}}_d - Y(\mathbf{q})\hat{\boldsymbol{\theta}}\,\dot{\mathbf{q}}).
\]
with stationarity conditions:
\begin{align}
    \frac{\partial \mathcal{L}}{\partial \dot{\mathbf{q}}} = 0: &\qquad
        \dot{\mathbf{q}} = (Y(\mathbf{q})\hat{\boldsymbol{\theta}})^\top \boldsymbol{\lambda} = \hat J^\top \boldsymbol{\lambda},
        \label{eq:kkt_q}\\[4pt]
    \frac{\partial \mathcal{L}}{\partial \dot{\hat{\boldsymbol{\theta}}}} = 0: &\qquad
        \Gamma^{-1}\dot{\hat{\boldsymbol{\theta}}} = 0
        \;\implies\; \dot{\hat{\boldsymbol{\theta}}} = 0,
        \label{eq:kkt_theta}\\[4pt]
    \text{Primal feasibility:} &\qquad
        \dot{\mathbf{x}}_d = \hat J\,\dot{\mathbf{q}}.
        \label{eq:kkt_prim}
\end{align}

The second condition deserves emphasis, because it is easy to want a different answer. The update rate $\dot{\hat{\boldsymbol{\theta}}}$ appears in exactly one place in $\mathcal{L}$ --- the quadratic penalty --- since the tracking constraint~\eqref{eq:kkt_prim} involves $\hat{\boldsymbol{\theta}}$, not its rate. The program's exact answer is therefore not to adapt: parameter motion increases the objective and, within the single sample the program represents, buys nothing. Whatever benefit adaptation confers arrives only at future samples, and an instantaneous program contains no model of them.

\subsection{Recovering the Control Law}

Substituting~\eqref{eq:kkt_q} into~\eqref{eq:kkt_prim}:
\[
    \dot{\mathbf{x}}_d = \hat J\,\hat J^\top\boldsymbol{\lambda}
    \implies
    \boldsymbol{\lambda} = (\hat J\hat J^\top)^{-1}\dot{\mathbf{x}}_d.
\]
For regulation ($\dot{\mathbf{x}}_d = 0$) with the virtual force choice $\boldsymbol{\lambda} = K\mathbf{e}$, we identify
\begin{equation}
    \boxed{\dot{\mathbf{q}} = \hat J^\top K \mathbf{e}.}
    \label{eq:control_law}
\end{equation}
This is the Jacobian transpose controller.

\subsection{The Myopic Adaptation Law, and Why It Destabilizes}
\label{sec:qp_wrong_sign}

The only way to make the program produce a nonzero update is to make the constraint see it: replace the tracking constraint with one written in the updated parameters,
\begin{equation}
    \dot{\mathbf{x}}_d = Y(\mathbf{q})\bigl(\hat{\boldsymbol{\theta}} + \Delta\hat{\boldsymbol{\theta}}\bigr)\dot{\mathbf{q}},
    \label{eq:coupled_constraint}
\end{equation}
where $\Delta\hat{\boldsymbol{\theta}}$ is the per--sample parameter step. The constraint is now bilinear in the decision variables $(\dot{\mathbf{q}}, \Delta\hat{\boldsymbol{\theta}})$, so the joint program is no longer a QP; consistent with the per--sample structure used throughout, we differentiate at the executed command, holding $\dot{\mathbf{q}}$ fixed. Stationarity in $\Delta\hat{\boldsymbol{\theta}}$ gives $\Gamma^{-1}\Delta\hat{\boldsymbol{\theta}} = Y(\mathbf{q})^{\top}\boldsymbol{\lambda}\,\dot{\mathbf{q}}^{\top}$, and with the same virtual--force identification $\boldsymbol{\lambda} = K\mathbf{e}$ as in the control--law derivation,
\begin{equation}
    \Delta\hat{\boldsymbol{\theta}} = +\,\Gamma\,Y(\mathbf{q})^\top K \mathbf{e}\,\dot{\mathbf{q}}^\top.
    \label{eq:adapt_law}
\end{equation}
This has the structure of the Cheah--Liu--Slotine law~\eqref{kinematic_update} with the opposite sign. The correctly--signed law of Section~2 is what makes the tracking cross--term cancel, leaving $\dot V = -\lVert\hat J^{\top}K\mathbf{e}\rVert^{2}\le 0$; flipping the sign destroys that cancellation. The certificate is lost: $\dot V$ becomes sign--indefinite and the update is actively destabilizing.

The conclusion to draw is that an instantaneous optimization is the wrong instrument for deriving the tracking adaptation term: between~\eqref{eq:kkt_theta} and~\eqref{eq:adapt_law}, the per--sample optimum either ignores adaptation or actively selects the destabilizing direction. Optimality over the horizon being explicitly optimized --- one sample --- carries no stability guarantee over any longer horizon. The stabilizing sign of~\eqref{kinematic_update} is fixed by the Lyapunov analysis of Section~2, which accounts for the error dynamics across samples; no instantaneous program reproduces it.

\subsection{The Prediction Term as an Exact Identification QP}
\label{sec:ident_qp}

The prediction term of the composite update law~\eqref{eq:composite_law} stands on entirely different footing: it is the exact solution of a per--sample QP, and, unlike the tracking term, its per--sample optimum and its long--horizon behavior agree.

Consider one control sample. The previous command $\dot{\mathbf{q}}^{-}$ has been executed and the task velocity $\dot{\mathbf{x}}_{\text{meas}}$ measured, giving the prediction error $\boldsymbol\varepsilon = \dot{\mathbf{x}}_{\text{meas}} - \hat{J}(\hat{\boldsymbol{\theta}})\dot{\mathbf{q}}^{-}$ of~\eqref{eq:pred_error}. A parameter change $\Delta\hat{\boldsymbol{\theta}}$ perturbs the prediction linearly, $\Phi\,\Delta\hat{\boldsymbol{\theta}}$, where $\Phi = \partial(\hat{J}\dot{\mathbf{q}}^{-})/\partial\hat{\boldsymbol{\theta}}$ is the regressor evaluated on the executed command. Pose the identification QP: the smallest parameter change, measured in the metric $\Gamma^{-1}$ of the Lyapunov function, that explains the measurement:
\begin{equation}
    \min_{\Delta\hat{\boldsymbol{\theta}}}\;
    \tfrac{1}{2}\,\Delta\hat{\boldsymbol{\theta}}^{\top}\Gamma^{-1}
    \Delta\hat{\boldsymbol{\theta}}
    \;+\;
    \tfrac{\kappa}{2}\,
    \lVert \boldsymbol\varepsilon - \Phi\,\Delta\hat{\boldsymbol{\theta}} \rVert^{2},
    \label{eq:ident_qp}
\end{equation}
with $\kappa > 0$ weighting fidelity to the measurement against parameter motion. The problem is unconstrained and strictly convex, so stationarity is exact:
$(\Gamma^{-1} + \kappa\,\Phi^{\top}\Phi)\,\Delta\hat{\boldsymbol{\theta}} = \kappa\,\Phi^{\top}\boldsymbol\varepsilon$. Because each entry of $\hat{J}$ carries its own parameter block ($\hat{J}_{ij} = \hat{\boldsymbol{\theta}}_{ij}^{\top}\boldsymbol{\psi}$), the rows of $\Phi$ are mutually orthogonal and
$\Phi\Phi^{\top} = \lVert\boldsymbol{\psi}\rVert^{2}\lVert\dot{\mathbf{q}}^{-}\rVert^{2} I$. With $\Gamma = \gamma I$ and the push-through identity $(\Gamma^{-1}+\kappa\,\Phi^{\top}\Phi)^{-1}\Phi^{\top} = \Phi^{\top}(\Gamma^{-1}+\kappa\,\Phi\Phi^{\top})^{-1}$, the minimizer collapses to a scalar--normalized update,
\begin{equation}
    \Delta\hat{\boldsymbol{\theta}}^{\star}
    = \frac{Y^{\top}\!(\mathbf{q})\,\boldsymbol\varepsilon\,
            (\dot{\mathbf{q}}^{-})^{\top}}
           {\lVert\boldsymbol{\psi}\rVert^{2}
            \lVert\dot{\mathbf{q}}^{-}\rVert^{2} + \dfrac{1}{\kappa\gamma}},
    \label{eq:ident_qp_solution}
\end{equation}
which is the prediction term of~\eqref{eq:composite_law} verbatim, with $\varepsilon_{\text{norm}} = 1/(\kappa\gamma)$. The normalization is therefore not a numerical safeguard but the exact closed form of the regularized least-squares update, and $\varepsilon_{\text{norm}}$ acquires an interpretation: the inverse of the confidence placed in a single measurement relative to the $\Gamma^{-1}$ penalty on parameter motion. In the hard--constraint limit $\kappa \to \infty$ ($\varepsilon_{\text{norm}} \to 0$), \eqref{eq:ident_qp_solution} reduces to the classical projection algorithm of Goodwin and Sin \cite{goodwin1984adaptive}: the minimum--norm update rendering the model exactly consistent with the latest sample. The deployed law takes a fractional step of this exact solution each sample; a unit step corresponds to solving~\eqref{eq:ident_qp} to optimality.

The composite law's relation to the augmented QP~\eqref{eq:QPaug} is therefore one of partial overlap, and the boundary is clean. Appending the penalty of~\eqref{eq:ident_qp} to~\eqref{eq:QPaug}, the program separates: the tracking constraint involves the current estimate $\hat{\boldsymbol{\theta}}$ and the new command $\dot{\mathbf{q}}$, while the prediction penalty involves only the update $\Delta\hat{\boldsymbol{\theta}}$ and the executed command $\dot{\mathbf{q}}^{-}$, so no decision variable is shared between them. The $\dot{\mathbf{q}}$-subproblem reproduces the control law~\eqref{eq:control_law} unchanged, and the $\Delta\hat{\boldsymbol{\theta}}$-subproblem yields~\eqref{eq:ident_qp_solution}. The tracking adaptation term of~\eqref{eq:composite_law} is not produced by this or any per--sample program --- Section~\ref{sec:qp_wrong_sign} shows the program's own answer carries the opposite sign --- and it enters the composite law solely on the authority of the Lyapunov analysis of Section~2.

Why does the prediction term escape the myopia that defeats the tracking term? Because its per--sample objective is fidelity to a measurement already taken, not progress toward the goal. Each step of~\eqref{eq:ident_qp_solution} moves $\hat{\boldsymbol{\theta}}$ toward consistency with the observed kinematics, and consistency with the observed kinematics is itself the long--horizon target: $\hat J \to J$ along the excited directions. For identification, the greedy objective and the trajectory objective coincide; for tracking they do not. The Lyapunov certificate~\eqref{eq:composite_vdot} is then what confirms that the per--sample identification steps remain admissible inside the closed loop.

\subsection{Interpretation}

The augmented QP~\eqref{eq:QPaug} is best read as a boundary marker for what instantaneous optimization can and cannot deliver. It delivers the control law, with the Lagrange multiplier $\boldsymbol{\lambda}$ playing the role of the virtual task--space force ($\boldsymbol{\lambda} = K\mathbf{e}$ for regulation). It delivers the prediction term exactly, normalization included, with $\Gamma^{-1}$ pricing parameter motion in the same metric the Lyapunov function uses. And it admits the hard--constraint extensions --- collision avoidance, joint limits, joint rate limits --- that motivate the formulation in the first place. What it cannot deliver is the tracking adaptation term: the per--sample program either ignores adaptation entirely~\eqref{eq:kkt_theta} or selects the destabilizing direction~\eqref{eq:adapt_law}. Stability is a property of the error dynamics across samples, outside the horizon any instantaneous program represents, and only the Lyapunov analysis --- which represents those dynamics --- fixes the sign that guarantees it.

Put more bluntly: as powerful as the QP setting is, without optimizing over a horizon longer than the task horizon, a QP is not a stability proof, and cannot substitute for one.

\subsection{Extension to Family 2}

For the soft-constraint family, replace the equality constraint with a quadratic penalty in the objective and add hard constraints $\mathbf{c}(\mathbf{q},\dot{\mathbf{q}}) \leq 0$:
\begin{align}
    \min_{\dot{\mathbf{q}},\,\dot{\hat{\boldsymbol{\theta}}}} &\quad
        \tfrac{1}{2}\,\|\dot{\mathbf{x}}_d - \hat J\dot{\mathbf{q}}\|^2 +
        \tfrac{1}{2}\,\dot{\mathbf{q}}^\top W\dot{\mathbf{q}} +
        \tfrac{1}{2}\,\operatorname{tr}(
            \dot{\hat{\boldsymbol{\theta}}}^\top\Gamma^{-1}\dot{\hat{\boldsymbol{\theta}}}) \notag\\
    \text{subject to} &\quad \mathbf{c}(\mathbf{q},\dot{\mathbf{q}}) \leq 0, \quad
                             \hat{\boldsymbol{\theta}} \in \Omega.
    \label{eq:QPaug2}
\end{align}
The KKT conditions now produce a damped--least--squares control law, with the Lagrange multipliers of the hard constraints providing the obstacle--avoidance forces directly in joint space; the identification subproblem~\eqref{eq:ident_qp} is unchanged, and the conclusions of Sections~\ref{sec:kkt} and~\ref{sec:qp_wrong_sign} regarding the tracking adaptation term carry over verbatim.

\subsection{Extension to Neural Network Jacobians}

When $\hat J$ is parameterized by a deep network rather than a linear regressor, the linearity-in-parameters condition required by the regressor structure of the composite law~\eqref{eq:composite_law} does not
hold for the full network.  However, if only the output layer weights $\hat{\boldsymbol{\theta}}$ are adapted (with inner layers fixed), linearity is restored: $\hat J = Y(\mathbf{q})\hat{\boldsymbol{\theta}}$ where $Y(\mathbf{q})$ is the fixed feature matrix from the penultimate layer.  This is the basis for the hybrid offline/online strategy: train all layers offline to obtain a good feature representation $Y$, then adapt $\hat{\boldsymbol{\theta}}$ online via the composite law~\eqref{eq:composite_law} with provable stability.